\documentclass[12pt,a4paper]{article}
\bibliographystyle{abbrvnat} 

\usepackage{bbm}
\usepackage{amsmath,amssymb}
\usepackage{setspace}
\usepackage{algpseudocode,algorithm}
\usepackage{tabularx}
\usepackage{amsthm}
\usepackage{graphicx}
\usepackage{here}
\usepackage{url}
\usepackage{txfonts}

\usepackage{natbib}
\bibpunct[:]{(}{)}{;}{a}{}{;}
\usepackage[usenames]{color}
\usepackage{authblk}

\allowdisplaybreaks[1]

\providecommand{\keywords}[1]{\textbf{\textit{Keywords---}} #1}

\newtheorem{theorem}{Theorem}[section]
\newtheorem{corollary}{Corollary}[section]

\newcommand{\argmin}{\mathop{\rm argmin}\limits}

\makeatletter
  
  \@addtoreset{equation}{section}
\makeatother

\title{Robust and Sparse Regression in GLM by Stochastic Optimization}
\date{} 

\author[1]{Takayuki Kawashima} 
\author[1,2,3]{Hironori Fujisawa} 

\affil[1]{ Department of Statistical Science, The Graduate University for Advanced Studies, Tokyo  \authorcr  E-mail: \texttt{t-kawa@ism.ac.jp}}

\affil[2]{The Institute of Statistical Mathematics, Tokyo  \authorcr E-mail: \texttt{fujisawa@ism.ac.jp} }

\affil[3]{ Department of Mathematical Statistics, Nagoya University Graduate School of Medicine  }

\begin{document}
\maketitle

\begin{abstract}
The generalized linear model (GLM) plays a key role in regression analyses. 
In high-dimensional data, the sparse GLM has been used but it is not robust against outliers. 
Recently, the robust methods have been proposed for the specific example of the sparse GLM. 
Among them, we focus on the robust and sparse linear regression based on the $\gamma$-divergence.
The estimator of the $\gamma$-divergence has strong robustness under heavy contamination. 
In this paper, we extend the robust and sparse linear regression based on the $\gamma$-divergence to the robust and sparse GLM based on the $\gamma$-divergence with a stochastic optimization approach in order to obtain the estimate. 
We adopt the randomized stochastic projected gradient descent as a stochastic optimization approach and extend the established convergence property to the classical first-order necessary condition. 
By virtue of the stochastic optimization approach, we can efficiently estimate parameters for very large problems. 
Particularly, we show the linear regression, logistic regression and Poisson regression with $L_1$ regularization in detail as specific examples of robust and sparse GLM. 
%
%
In numerical experiments and real data analysis, the proposed method outperformed comparative methods.
\end{abstract}

\keywords{Sparse, Robust, Divergence, Stochastic Gradient Descent, Generalized Linear Model}

\section{Introduction}\label{Sect: intro}
The regression analysis is a fundamental tool in data analysis. 
The Generalized Linear Model (GLM) \citep{10.2307/2344614, mccullagh1989generalized} is often used and includes many important regression models, including linear regression, logistic regression and Poisson regression. 
Recently, the sparse modeling has been popular in GLM to treat high-dimensional data and, for some specific examples of GLM, the robust methods have also been incorporated (linear regression: \citet{RePEc:bes:jnlasa:v:102:y:2007:m:december:p:1289-1299, alfons2013}, logistic regression: \citet{doi:10.1093/bioinformatics/btt078, doi:10.1080/10618600.2012.737296} ). 

\citet{e19110608} proposed a robust and sparse regression based on the $\gamma$-divergence \citep{Fujisawa:2008:RPE:1434999.1435056}, which has a strong robustness that the latent bias can be sufficiently small even under heavy contamination. 
The proposed method showed better performances than the past methods by virtue of strong robustness. 
A coordinate descent algorithm with Majorization-Minimization algorithm was constructed as an efficient estimation procedure for linear regression, but it is not always useful for GLM. 
To overcome this problem, we propose a new estmation procedure with a stochastic optimization approach, which largely reduces the computational cost and is easily applicable to any example of GLM. 
In many stochastic optimization approaches, we adopt the randomized stochastic projected gradient descent (RSPG) proposed by \citet{Ghadimi:2016}. 
In particular, when we consider the Poisson regression with $\gamma$-divergence, although the loss function includes a hypergeometric series and demands high computational cost, the stochastic optimization approach can easily overcome this difficulty. 

In Section~\ref{2}, we review the robust and sparse regression via $\gamma$-divergence. 
In Section~\ref{3}, the RSPG is explained with regularized expected. 
In Section~\ref{4}, an online algorithm is proposed for GLM and the robustness of online algorithm is described with some typical examples of GLM. 
In Section~\ref{Sect: convergence prop}, the convergence property of the RSPG is extended to the classical first-order necessary condition. 
In Sections~\ref{6}~and~\ref{7}, numerical experiments and real data analysis are illustrated to show better performances than comparative methods. 
Concluding remarks are given in Section~\ref{8}.  
%
%
%
%
%
%
%
%
%

\section{Regression via $\gamma$-divergence}\label{2}
\subsection{Regularized Empirical risk minimization}
We suppose $g$ is the underlying probability density function and $f$ is a parametric probability density function. 
Let us define the $\gamma$-cross entropy for regression given by
\begin{align*}
&d_{\gamma} (g(y|x),f(y|x);g(x))  \\
&=  -\frac{1}{\gamma} \log \int \frac{ \int g(y|x) f(y|x)^{\gamma} dy }{  \left( \int f(y|x)^{1+\gamma}dy\right)^\frac{\gamma}{1+\gamma}} g(x)dx \\
& = -\frac{1}{\gamma} \log \int \int  \frac{   f(y|x)^{\gamma}  }{  \left( \int f(y|x)^{1+\gamma}dy\right)^\frac{\gamma}{1+\gamma}} g(x,y)dxdy, \\
& = -\frac{1}{\gamma} \log E_{g(x,y)} \left[  \frac{   f(y|x)^{\gamma}  }{  \left( \int f(y|x)^{1+\gamma}dy\right)^\frac{\gamma}{1+\gamma}}  \right]. 
\end{align*}
The $\gamma$-divergence for regression is defined by 
\begin{align*}
&D_{\gamma} (g(y|x),f(y|x);g(x))  \\
&=-d_{\gamma} (g(y|x),g(y|x);g(x)) +d_{\gamma} (g(y|x),f(y|x);g(x)).
\end{align*}
The main idea of robustness in the $\gamma$-divergence is based on density power weight $f(y|x)^{\gamma}$ which gives a small weight to the terms related to outliers. 
Then, the parameter estimation using the $\gamma$-divergence becomes robust against outliers and it is known for having a strong robustness, which implies that the latent bias can be sufficiently small even under heavy contamination.
More details about robust properties were investigated by \citet{Fujisawa:2008:RPE:1434999.1435056}, \citet{RePEc:oup:biomet:v:102:y:2015:i:3:p:559-572.} and \citet{e19110608}. 

Let $f(y|x;\theta)$ be the parametric probability density function with parameter $\theta$. 
The target parameter can be considered by 
\begin{align*}
\theta^*_{\gamma} & = \argmin_{\theta} D_{\gamma}(g(y|x),f(y|x;\theta);g(x)) \nonumber \\
&= \argmin_{\theta} d_{\gamma}(g(y|x),f(y|x;\theta);g(x)).
\end{align*}
Moreover, we can also consider the target parameter with a convex regularization term, given by
\begin{align}\label{def sparse exp est}
\theta^*_{\gamma, pen} & = \argmin_{\theta} D_{\gamma}(g(y|x),f(y|x;\theta);g(x)) + \lambda P(\theta) \nonumber \\
&= \argmin_{\theta} d_{\gamma}(g(y|x),f(y|x;\theta);g(x))+ \lambda P(\theta) ,
\end{align}
where $P(\theta)$ is a convex regularization term for parameter $\theta$ and $\lambda$ is a tuning parameter. 
As an example of convex regularization term, we can consider $L_1$ (Lasso, \citealt {Tibshirani94regressionshrinkage}), elasticnet \citep{Zou05regularizationand}, the indicator function of a closed convex set \citep{Kivinen95exponentiatedgradient, Duchi:2008:EPL:1390156.1390191} and so on.
In what follows, we refer to the regression based on the $\gamma$-divergence as the $\gamma$-regression.

Let $(x_1,y_1) ,\ldots, (x_n,y_n)$ be the observations randomly drawn from the underlying distribution $g(x,y)$. 
The $\gamma$-cross entropy can be empirically estimated by 
\begin{align*}
&\bar{d}_\gamma(f(y|x;\theta)) = -\frac{1}{\gamma} \log \frac{1}{n} \sum_{i=1}^n  \frac{   f(y_i|x_i)^{\gamma}  }{  \left( \int f(y|x_i)^{1+\gamma}dy\right)^\frac{\gamma}{1+\gamma}} .
\end{align*}
By virtue of (\ref{def sparse exp est}), the sparse $\gamma$-estimator can be proposed by 
\begin{align}\label{def sparse emp est}
\hat{ \theta}_{\gamma, pen}&=\argmin_{\theta} \bar{d}_\gamma(f(y|x;\theta))+ \lambda P(\theta) .
\end{align}
To obtain the minimizer, we solve a non-convex and non-smooth optimization problem. 
%
%
Iterative estimation algorithms for such a problem can not easily achieve numerical stability and efficiency. 

\subsection{MM algorithm for $\gamma$-regression}
\citet{e19110608} proposed the iterative estimation algorithm for (\ref{def sparse emp est}) by Majorization-Minimization algorithm (MM algorithm) \citep{hunter:mm}. 
It has a monotone decreasing property, i.e., the objective function monotonically decreases at each iterative step, which property leads to numerical stability and efficiency. 
In particular, the linear regression with $L_1$ penalty was deeply considered. 

Here, we explain the idea of MM algorithm briefly.
Let $h(\eta)$ be the objective function. 
Let us prepare the majorization function $h_{MM}$ satisfying
\begin{align*}
h_{MM}(\eta^{(m)}|\eta^{(m)})& = h(\eta^{(m)}), \\
h_{MM}(\eta|\eta^{(m)}) &\geq h(\eta) \ \ \mbox{ for all }  \eta,
\end{align*}
where $\eta^{(m)}$ is the parameter of the $m$-th iterative step for $m=0,1,2,\ldots$. 
MM algorithm optimizes the majorization function instead of the objective function as follows:
\begin{align*}
\eta^{(m+1)} = \argmin_{\eta} h_{MM}(\eta|\eta^{(m)}).
\end{align*}
Then, we can show that the objective function $h(\eta)$ monotonically decreases at each iterative step, because
\begin{align*}
h(\eta^{(m)}) &= h_{MM}(\eta^{(m)}|\eta^{(m)}) \\
& \geq h_{MM}(\eta^{(m+1)}|\eta^{(m)}) \\
& \geq h(\eta^{(m+1)}).
\end{align*}
Note that $\eta^{(m+1)}$ is not necessary to be the  minimizer of $h_{MM}(\eta|\eta^{(m)})$. 
We only need 
\begin{align*}
h_{MM}(\eta^{(m)}|\eta^{(m)}) \geq h_{MM}(\eta^{(m+1)}|\eta^{(m)}).
\end{align*} 
The problem on MM algorithm is how to make a majoraization function $h_{MM}$. 

In \citet{e19110608}, the following majorization function was proposed by using Jensen's inequality:
\begin{align}\label{MM func gam}
h_{MM}(\theta|\theta^{(m)}) = - \frac{1}{\gamma} \sum_{i=1}^n \alpha^{(m)}_{i} \log \left\{ \frac{  f(y_i|x_i;\theta)^{\gamma} }{ \left( \int f(y|x_i;\theta)^{1+\gamma}dy\right)^{\frac{\gamma}{1+\gamma}} } \right\} + \lambda P(\theta) ,
\end{align}
where 
\begin{align*}
\alpha^{(m)}_{i}= \frac{  \frac{  f(y_i|x_i;\theta^{(m)} )^{\gamma} }{ \left( \int f(y|x_i;\theta^{(m)})^{1+\gamma}dy\right)^{\frac{\gamma}{1+\gamma}} }   }{   \sum_{l=1}^n \frac{  f(y_l|x_l;\theta^{(m)} )^{\gamma} }{ \left( \int f(y|x_l;\theta^{(m)})^{1+\gamma}dy\right)^{\frac{\gamma}{1+\gamma}} } }. 
\end{align*}
Moreover, for linear regression $y=\beta_0 + x^T \beta + e \ (e \sim N(0,\sigma^2) )$ with $L_1$ regularization, the following majorization function and iterative estimation algorithm based on a coordinate descent method were obtained:
\begin{align*}
 h_{MM, \ linear}(\theta|\theta^{(m)}) &= \frac{1}{2(1+\gamma)}  \log \sigma^{2} + \frac{1}{2} \sum_{i=1}^n \alpha^{(m)}_i \frac{(y_i -\beta_0 - x_{i}^{T}\beta )^{2}}{\sigma^{2}} +\lambda ||\beta||_1 , \\
\beta_0^{(m+1)} &=  \sum_{i=1}^n \alpha_i^{(m)} (y_i-{x_i}^T \beta^{(m)}) , \\
\beta_{j}^{(m+1)}  &= \frac{ S\left(\sum_{i=1}^n \alpha^{(m)}_{i}(y_i-\beta^{(m+1)}_0 - r_{i,-j}^{(m)} )x_{i j} , \  {\sigma^2}^{(m)}\lambda \right) }{ \left( \sum_{i=1}^n \alpha^{(m)}_i x^{2}_{ij} \right) } \ \ (j=1,\ldots,p), \\
{\sigma^{2}}^{(m+1)} &=  (1 + \gamma) \sum_{i=1}^n \alpha^{(m)}_i ( y_i  - \beta^{(m+1)}_0 - x^{T}_{i} \beta^{(m+1)} )^{2} ,
\end{align*}
where $S(t,\lambda)=\textrm{sign}(t)(|t|-\lambda) $ and $r_{i,-j}^{(m)}=\sum_{ k \neq  j} x_{ik} ( \mathbbm{1}_{( k <  j) }  \beta_k^{(m+1)} +  \mathbbm{1}_{ (k > j) } \beta_k^{(m)} )$.

\subsection{Sparse $\gamma$-Poisson regression case}
Typical GLMs are a linear regression, logistic regression and Poisson regression: The former two regressions are easily treated with the above coordinate descent algorithm, but the Poisson regression has a problem as described in the following
Here, we consider a Poisson regression with a regularization term. 
Let $f(y|x;\theta)$ be the conditional density with $\theta=(\beta_0,\beta)$, given by 
\begin{align*}
f(y|x;\theta) =  \frac{\exp(-\mu_{x}(\theta)) }{y!} \mu_{x}(\theta)^y,
\end{align*}
where $\mu_{x}(\theta) = \mu_{x}(\beta_0,\beta) = \exp(\beta_0+x^T\beta)$. 
By virtue of (\ref{MM func gam}), we can obtain the majorization function for Poisson regression with a regularization term, given by 
\begin{align}\label{major func of gamma poisson}
&h_{MM, \ poisson}(\theta|\theta^{(m)}) = - \sum_{i=1}^n \alpha_{i}^{(m)} \log \frac{\exp(-\mu_{x_i}(\theta)) }{y_{i}!} \mu_{x_i}(\theta)^{y_i} \nonumber  \\
 & \qquad +\frac{1}{1+\gamma} \sum_{i=1}^n \alpha_{i}^{(m)} \log \left\{  \sum_{y=0}^\infty \frac{ \exp( -(1+\gamma)\mu_{x_i}(\theta))}{y!^{1+\gamma}} \mu_{x_i}(\theta)^{(1+\gamma)y}  \right\} + \lambda P(\theta).
\end{align}
The second term contains the hypergeometric series, and then we can not obtain a closed form on the MM algorithm with respect to the parameters $\beta_0, \beta$ although this series converges (see Sect.~\ref{update algo:online gam poisson}). 
Therefore, we can not derive an efficient iterative estimation algorithm based on a coordinate descent method in a similar way to in \citet{e19110608}. 
Other sparse optimization methods which use a linear approximation on the loss function, e.g., proximal gradient descent \citep{RePEc:cor:louvco:2007076, Duchi:2009:EOB:1577069.1755882, Beck:2009:FIS:1658360.1658364}, can solve (\ref{major func of gamma poisson}).
However, these methods require at least sample size $n$ times of an approximate calculation for the hypergeometric series at each iterative step in sub-problem  $\argmin_{\theta} h_{MM} (\theta|\theta^{(m)})$. 
%
%
Therefore, it requires high computation cost, especially for very large problems.
We need another optimization approach to overcome such problems. 
In this paper, we consider minimizing the regularized expected risk (\ref{def sparse exp est}) directly by a  stochastic optimization approach. 
In what follows, we refer to the sparse $\gamma$-regression in GLM as the sparse $\gamma$-GLM. 

\section{Stochastic optimization approach for regularized expected risk minimization}\label{3}
The regularized expected risk minimization is generally the following form: 
\begin{align}\label{RERM def}
\Psi^{*} \coloneqq \min_{\theta \in \Theta} \left\{ \Psi(\theta) \coloneqq E_{(x,y)} \left[ l((x,y);\theta) \right] + \lambda P(\theta) \right\},
\end{align}
where $\Theta$ is a closed convex set in $\mathbb{R}^n$, $l$ is a loss function with a parameter $\theta$ and $\Psi(\theta)$ is bounded below over $\Theta$ by $\Psi^* > - \infty$. 
Stochastic optimization approach solves (\ref{RERM def}) sequentially. 
More specifically, we draw a sequence of i.i.d. paired samples $(x_1,y_1),(x_2,y_2),\ldots,(x_t,y_t),\ldots$ and, at $t$-th time, update the parameter $\theta^{(t)}$ based on the latest paired sample $(x_t,y_t)$ and the previous updated parameter $\theta^{(t-1)}$. 
Therefore, it requires low computational complexity per iteration and stochastic optimization can scale well for very large problems. 

\subsection{Stochastic gradient descent}\label{SGD sect}
The stochastic gradient descent (SGD) is one of popular stochastic optimization approaches and is widely used in machine learning community \citep{bottou2010large}. 
The SGD takes the form
\begin{align}\label{update form sgd}
\theta^{(t+1)} = \argmin_{\theta  \in \Theta} \left\langle \nabla l((x_t,y_t);\theta^{(t)})   , \theta \right\rangle + \lambda P(\theta)  +\frac{1}{2 \eta_t}\|\theta-\theta^{(t)} \|^{2}_2,
\end{align}
where $\eta_t$ is a step size parameter. 
For some important examples, e.g., $L_1$ regularization, (\ref{update form sgd}) can be solved in a closed form. 
%

When a loss function $l$ is convex (possibly non-differentiable) and $\eta_t$ is set to be appropriate, e.g., $\eta_t= {\cal O} \left( \frac{1}{ \sqrt{t}} \right) $, under some mild conditions, the convergence property was established for the average of the iterates, i.e., $\bar{\theta}_{T}=\frac{1}{T} \sum_{t=1}^T \theta^{(t)}$ as follows (see, e.g., \cite{Bubeck:2015:COA:2858997.2858998}): 
\begin{align*}
E \left[ \Psi(\bar{\theta}_T) \right] - \Psi^* \leq {\cal O} \left( \frac{1}{\sqrt{T}} \right),
\end{align*}
where the expectation is taken with respect to past paired samples $(x_t, y_t) \ldots (x_T,y_T)$. 
Moreover, for some variants of SGD, e.g., RDA \citep{Xiao2010DualAM}, Mirror descent \citep{DBLPDuchiSST10}, Adagrad \citep{journals/jmlr/DuchiHS11}, the convergence property was established under similar assumptions. 
%

These methods assume that a loss function is convex to establish the convergence property, but the loss function is non-convex in our problem (\ref{def sparse exp est}). 
Then, we can not adopt these methods directly. 
%
Recently, for non-convex loss function with convex regularization term, randomized stochastic projected gradient (RSPG) was proposed by \cite{Ghadimi:2016}. 
Under some mild conditions, the convergence property was established. 
Therefore, we consider applying the RSPG to our problem (\ref{def sparse exp est}). 

\subsection{Randomized stochastic projected gradient} 
%
First, we explain the RSPG, following \cite{Ghadimi:2016}. 
%
%
The RSPG takes the form 
\begin{align}\label{update form rspg}
 \theta^{(t+1)} = \argmin_{\theta  \in \Theta } \left\langle \frac{1}{ m_t} \sum_{i=1}^{m_t} \nabla l((x_{t,i},y_{t,i});\theta^{(t)})  , \theta \right\rangle + \lambda P(\theta)  +\frac{1}{ \eta_t} V( \theta , \theta^{(t)} )  ,
\end{align}
where $m_t$ is the size of mini-batch at $t$-th time, $(x_{t,i},y_{t,i})$ is the $i$-th mini-batch sample at $t$-th time and 
\begin{align*}
V(a,b) = w(a) -w(b) - \langle \nabla w(b) , a-b \rangle , 
\end{align*}
where  $w$ is continuously differentiable and $\alpha$-strongly convex function satisfying $\langle a - b , \nabla w(a) - \nabla w(b) \rangle \geq \alpha \| a - b \|^2$ for $a , b  \in \Theta $. 
When $w(\theta) = \frac{1}{2} || \theta ||_2^2$, i.e., $V(\theta, \theta^{(t)}) = \frac{1}{2} || \theta - \theta^{(t)}  ||_2^2$, (\ref{update form rspg}) is almost equal to (\ref{update form sgd}). 

Here, we denote two remarks on RSPG as a difference from the SGD. 
One is that the RSPG uses the mini-batch strategy, i.e., taking multiple samples at $t$-th time. 
The other is that the RSPG randomly select a final solution $\hat{\theta}$ from $\left\{ \theta^{(1)}, \ldots , \theta^{(T)} \right\}$ according to a certain probability distribution instead of taking the average of the iterates. 
This is because for non-convex stochastic optimization, later iterates does not always gather around local minimum and the average of the iterates can not work in such a convex case. 
%
%
\begin{algorithm}[!h]
\caption{Randomized stochastic projected gradient}\label{general RSPG}
\begin{algorithmic}
\Require{ The initial point $\theta^{(1)}$, the step size $\eta_t$, the mini-batch size $m_t$, the iteration limit $T$ and the probability mass function $P_{R}$ supported on $\left\{1,\ldots,T \right\}$.}
\State{Let $R$ be a random variable generated by probability mass function $P_{R}$.}
\For{ $t = 1 , \ldots , R-1$ }
\State{ $ \theta^{(t+1)} = \argmin_{\theta  \in \Theta} \left\langle \frac{1}{ m_t} \sum_{i=1}^{m_t} \nabla l((x_{t,i},y_{t,i});\theta^{(t)})   , \theta \right\rangle + \lambda P(\theta) +\frac{1}{ \eta_t} V( \theta , \theta^{(t)} )   .$ } 
\EndFor 
\Ensure{ $ \theta^{(R)}. $  }
\end{algorithmic}
\end{algorithm}

Next, we show the implementation of the RSPG, given by Algorithm\ref{general RSPG}. 
However, Algorithm \ref{general RSPG} has a large deviation of output because the  only one final output is selected via some probability mass function $P_R$. 
Therefore, \cite{Ghadimi:2016} also proposed the two phase RSPG (2-RSPG) which has the post-optimization phase. 
In the post-optimization phase, multiple outputs are selected and these are validated to determine the final output, as shown in Algorithm \ref{general 2-RSPG}.
%
%
\begin{algorithm}[!h]
\caption{Two phase randomized stochastic projected gradient}\label{general 2-RSPG}
\begin{algorithmic}
\Require{ The initial point $\theta^{(1)}$, the step size $\eta_t$, the mini-batch size $m_t$, the iteration limit $T$, the probability mass function $P_{R}$ supported on $\left\{1,\ldots,T \right\}$, the number of candidates $N_{cand}$ and the sample size $N_{post}$ for validation.}
\State{Let $R_{1}, R_{2},\ldots,R_{N_{cand}}$ be random variables generated by probability mass function $P_{R}$.}
\For{ $t = 1 , \ldots , \max{ \left\{R_{1}, R_{2},\ldots,R_{N_{cand}} \right\}}-1$ }
\State{ $ \theta^{(t+1)} = \argmin_{\theta  \in \Theta} \left\langle \frac{1}{ m_t} \sum_{i=1}^{m_t} \nabla l((x_{t,i},y_{t,i});\theta^{(t)})   , \theta \right\rangle + \lambda P(\theta) +\frac{1}{ \eta_t} V( \theta , \theta^{(t)} )   .$ } 
\EndFor 
\State{ \bf{Post-optimization phase:} } 
\State{ $ \theta^{(R_s)} = \argmin_{s=1,\ldots,N_{cand}}  \frac{1}{\eta_{R_s} } \|   \theta^{ \left( R_s \right) } - \theta^{ \left( R_{s}^{+} \right) }  \| ,$  } 
\State{where $ \theta^{ \left( R_{s}^{+} \right) } = \argmin_{\theta  \in \Theta}  \left\langle \frac{1}{N_{post}} \sum_{i=1}^{N_{post}} \nabla  l((x_{i},y_{i});\theta^{(R_s)})  , \theta \right\rangle +\lambda P(\theta) +\frac{1}{\eta_{R_s} } V(\theta, \theta^{(R_s)}) . $}
\Ensure{ $ \theta^{(R_s)}. $  }
\end{algorithmic}
\end{algorithm}
This can be expected to achieve a better complexity result of finding an $(\epsilon,\Lambda)-solution$, i.e., Prob$\left\{   C(\theta^{(R)})   \leq \epsilon \right\} \geq 1- \Lambda $, where $C$ is some convergence criterion, for some $\epsilon > 0 $ and $\Lambda \in (0,1)$. 
%
%
For more detailed descriptions and proofs, we refer to the Sect.4 in \cite{Ghadimi:2016}.

\section{Online robust and sparse GLM}\label{4}
In this section, we show the sparse $\gamma$-GLM with the stochastic optimization approach on three specific examples; linear regression,  logistic regression and Poisson regression with $L_1$ regularization. 
In what follows, we refer to the sparse $\gamma$-GLM with the stochastic optimization approach as the online sparse $\gamma$-GLM. 

In order to apply the RSPG to our methods (\ref{def sparse exp est}), we prepare the monotone transformation of the $\gamma$-cross entropy for regression in (\ref{def sparse exp est}) as follows
\begin{align}\label{rspg form gam}
\argmin_{\theta \in \Theta}  E_{g(x,y)} \left[ - \frac{   f(y|x;\theta)^{\gamma}  }{  \left( \int f(y|x;\theta)^{1+\gamma}dy\right)^\frac{\gamma}{1+\gamma}}  \right] + \lambda P(\theta) ,
\end{align}
and we suppose that $\Theta$ is $\mathbb{R}^n$ or closed ball with sufficiently large radius. 
Then, we can apply the RSPG to (\ref{rspg form gam}) and by virtue of (\ref{update form rspg}), the update formula takes the form 
\begin{align}\label{update gam rspg}
\theta^{(t+1)} = \argmin_{\theta \in \Theta}   \left\langle - \frac{1}{m_t} \sum_{i=1}^{m_t}  \nabla\frac{f(y_{t,i}|x_{t,i};\theta^{(t)})^{\gamma}}{\left( \int f(y|x_{t,i};\theta^{(t)})^{1+\gamma}dy\right)^{\frac{\gamma}{1+\gamma}}}  , \theta  \right\rangle + \lambda P(\theta)   +\frac{1}{ \eta_t} V( \theta , \theta^{(t)} )  .
\end{align}
More specifically, we suppose that $V(\theta, \theta^{(t)}) = \frac{1}{2} || \theta - \theta^{(t)} ||_2^2$  because the update formula can be obtained in closed form for some important sparse regularization terms, e.g., $L_1$ regularization, elasticnet. 
%
We illustrate the update algorithms based on Algorithm \ref{general RSPG} for three specific examples. 
The update algorithms based on Algorithm \ref{general 2-RSPG} are obtained in a similar manner. 

In order to implement our methods, we need to determine some tuning parameters, e.g., the step size $\eta_t$, mini-batch size $m_t$. 
In Sect.~\ref{Sect: convergence prop}, we discuss how to determine some tuning parameters in detail.

\subsection{Online sparse $\gamma$-linear regression}
Let $f(y|x;\theta)$ be the conditional density with $\theta=(\beta_0, \beta^T, \sigma^2)^T$, given by 
\begin{align*}
f(y|x;\theta)=\phi(y;\beta_0+x^T\beta,\sigma^2),
\end{align*} 
where $\phi(y;\mu,\sigma^2)$ is the normal density with mean parameter $\mu$ and variance parameter $\sigma^2$. 
Suppose that $P(\theta)$ is the $L_1$ regularization $||\beta ||_1$. 
Then, by virtue of (\ref{update gam rspg}), we can obtain the update formula given by 
\begin{align}\label{gam linear}
& \left( \beta_0^{(t+1)},\beta^{(t+1)},{\sigma^{2}}^{(t+1)} \right) \nonumber \\
& = \argmin_{\beta_0,\beta,\sigma^{2}}   \xi_1 (\beta_0^{(t)})  \beta_0  + \langle \xi_2 (\beta^{(t)}) , \beta \rangle + \xi_3 ( {\sigma^2}^{(t)})  \sigma^{2}  \nonumber \\
&  \qquad + \lambda \| \beta \|_1 + \frac{1}{2\eta_t}  \| \beta_0-\beta_0^{(t)} \|^{2}_2 +\frac{1}{2\eta_t}  \| \beta-\beta^{(t)} \|^{2}_2  +\frac{1}{2\eta_t}  \| \sigma^2-{\sigma^2}^{(t)} \|^{2}_2 ,
\end{align}
where
\begin{align*}
 \xi_1(\beta_0^{(t)}) &=  - \frac{1}{m_t} \sum_{i=1}^{m_t} \left[  \frac{\gamma (y_{t,i} -\beta_0^{(t)} - {x_{t,i}}^T \beta^{(t)} )}{ {\sigma^2}^{(t)} } \left( \frac{1+\gamma}{2 \pi {\sigma^2}^{(t)}} \right)^{\frac{\gamma}{2(1+\gamma)}} \exp \left\{- \frac{\gamma (y_{t,i} -\beta_0^{(t)} - {x_{t,i}}^T \beta^{(t)})^2 }{2 {\sigma^2}^{(t)}} \right\} \right], \\
\xi_2(\beta^{(t)}) &=  - \frac{1}{m_t} \sum_{i=1}^{m_t}  \left[ \frac{\gamma (y_{t,i} -\beta_0^{(t)} - {x_{t,i}}^T \beta^{(t)} )}{ {\sigma^2}^{(t)} } \left( \frac{1+\gamma}{2 \pi {\sigma^2}^{(t)}} \right)^{\frac{\gamma}{2(1+\gamma)}} \exp \left\{- \frac{\gamma (y_{t,i} -\beta_0^{(t)} - {x_{t,i}}^T \beta^{(t)})^2 }{2 {\sigma^2}^{(t)}} \right\} x_{t,i} \right] , \\
\xi_3( {\sigma^2}^{(t)})  &=  \frac{1}{m_t} \sum_{i=1}^{m_t} \left[  \frac{\gamma}{2} \left( \frac{1+\gamma}{2 \pi {\sigma^2}^{(t)} } \right)^{\frac{\gamma}{2(1+\gamma)}} \left\{ \frac{1}{(1+\gamma) {\sigma^2}^{(t)} }  - \frac{(y_{t,i} -\beta_0^{(t)} - {x_{t,i}}^T \beta^{(t)} )^2}{ {\sigma^4}^{(t)}}  \right\} \right. \\
& \left.  \qquad \qquad \exp \left\{ - \frac{\gamma(y_{t,i} - \beta_0^{(t)} - {x_{t,i}}^T \beta^{(t)} )^2 }{2 {\sigma^2}^{(t)}} \right\}  \right]  .
\end{align*}
Consequently, we can obtain the update algorithm, as shown in Algorithm \ref{algo online gam reg}. 
\begin{algorithm}[!h]
\caption{Online sparse $\gamma$-linear regression}\label{algo online gam reg}
\begin{algorithmic}
\Require{ The initial points $\beta_0^{(1)}, \ \beta^{(1)}, \ {\sigma^2}^{(1)}$, the step size $\eta_t$, the mini-batch size $m_t$, the iteration limit $T$ and the probability mass function $P_{R}$ supported on $\left\{1,\ldots,T \right\}$.}
\State{Let $R$ be a random variable generated by probability mass function $P_{R}$.}
\For{ $t = 1  , \ldots , R-1$ }
\State{ $ \beta_0^{(t+1)} = \beta_0^{(t)} - \eta_t \xi_1(\beta_0^{(t)}) .$ } 
\State{ $ \beta_j^{(t+1)} = S (\beta_j^{(t)} - \eta_{t} {\xi_{2}}_{j} (\beta^{(t)}), \eta_{t} \lambda)   \ \ (j=1,\ldots,p) .$ } 
\State{ $ {\sigma^2}^{(t+1)} = {\sigma^2}^{(t)} - \eta_t \xi_3({\sigma^2}^{(t)}) .$ }
\EndFor 
\Ensure{ $ \beta_0^{(R)}, \ \beta^{(R)}, \ {\sigma^{2}}^{(R)} .$  }
\end{algorithmic}
\end{algorithm}

Here, we briefly show the robustness of online sparse $\gamma$-linear regression. 
For simplicity, we consider the intercept parameter $\beta_0$. 
Suppose that the $(x_{t,k},y_{t,k})$ is an outlier at $t$-th time. 
The conditional probability density $f(y_{t,k}|x_{t,k};\theta^{(t)})$ can be expected to be sufficiently small. 
We see from $ f(y_{t,k}|x_{t,k};\theta^{(t)}) \approx 0$ and (\ref{gam linear}) that
\begin{align*}
& \beta_0^{(t+1)} \nonumber \\
&= \argmin_{\beta_0}  - \frac{1}{m_t} \sum_{1 \leq i \neq k \leq m_t}  \left[ \frac{\gamma (y_{t,i} -\beta_0^{(t)} - {x_{t,i}}^T \beta^{(t)} )}{ {\sigma^2}^{(t)} } \left( \frac{1+\gamma}{2 \pi {\sigma^2}^{(t)}} \right)^{\frac{\gamma}{2(1+\gamma)}} \exp \left\{- \frac{\gamma (y_{t,i} -\beta_0^{(t)} - {x_{t,i}}^T \beta^{(t)})^2 }{2 {\sigma^2}^{(t)}} \right\} \right] \times  \beta_0  \nonumber \\
& \quad    - \frac{1}{m_t}   \frac{\gamma (y_{t,k} -\beta_0^{(t)} - {x_{t,k}}^T \beta^{(t)} )}{ {\sigma^2}^{(t)} } \left( \frac{1+\gamma}{2 \pi {\sigma^2}^{(t)}} \right)^{\frac{\gamma}{2(1+\gamma)}} \exp \left\{- \frac{\gamma (y_{t,k} -\beta_0^{(t)} - {x_{t,k}}^T \beta^{(t)})^2 }{2 {\sigma^2}^{(t)}} \right\}   \times  \beta_0  \\
& \qquad  + \frac{1}{2\eta_t}  \| \beta_0-\beta_0^{(t)} \|^{2}_2   \\
&= \argmin_{\beta_0}  - \frac{1}{m_t} \sum_{1 \leq i \neq k \leq m_t}  \left[ \frac{\gamma (y_{t,i} -\beta_0^{(t)} - {x_{t,i}}^T \beta^{(t)} )}{ {\sigma^2}^{(t)} } \left( \frac{1+\gamma}{2 \pi {\sigma^2}^{(t)}} \right)^{\frac{\gamma}{2(1+\gamma)}} \exp \left\{- \frac{\gamma (y_{t,i} -\beta_0^{(t)} - {x_{t,i}}^T \beta^{(t)})^2 }{2 {\sigma^2}^{(t)}} \right\} \right] \times  \beta_0  \nonumber \\
& \quad   \underset{\approx 0}{ \underline{  - \frac{1}{m_t}   \frac{ \gamma (1+\gamma)^{\frac{\gamma}{2(1+\gamma)}}  (y_{t,k} -\beta_0^{(t)} - {x_{t,k}}^T \beta^{(t)} )}{ {\sigma^2}^{(t)} } \left( 2 \pi {\sigma^2}^{(t)}  \right)^{\frac{\gamma^2}{2(1+\gamma)}} f(y_{t,k} | x_{t,k} ; \theta^{(t)})^{\gamma}   } } \times  \beta_0  \\
& \qquad  + \frac{1}{2\eta_t}  \| \beta_0-\beta_0^{(t)} \|^{2}_2   \\
\end{align*}
Therefore, the effect of an outlier is naturally ignored in (\ref{gam linear}). 
Similarly, we can also see the robustness for parameters $\beta$ and $\sigma^2$. 

\subsection{Online sparse $\gamma$-logistic regression}
Let $f(y|x;\theta)$ be the conditional density with $\theta=(\beta_0,\beta^T)^T$, given by 
\begin{align*}
f(y|x;\beta_0,\beta)=F( \tilde{x}^T \theta )^y (1 - F(\tilde{x}^T \theta) )^{(1-y)} ,
\end{align*}
where $\tilde{x}=(1,x^T)^T$ and $F(u)= \frac{1}{1+\exp(-u)}$.
Then, by virtue of (\ref{update gam rspg}), we can obtain the update formula given by
\begin{align}\label{gam logistic}
& \left( \beta_0^{(t+1)}, \beta^{(t+1)} \right) \nonumber \\
& \argmin_{\beta_0, \beta}  \nu_1(\beta_0^{(t)})    \beta_0 + \langle  \nu_2(\beta^{(t)}) , \beta \rangle  + \lambda || \beta ||_1  +\frac{1}{2\eta_t}  \| \beta_0-\beta_0^{(t)} \|^{2}_2  +\frac{1}{2\eta_t}  \| \beta-\beta^{(t)} \|^{2}_2 ,
\end{align}
where
\begin{align*}
\nu_1(\beta_0^{(t)}) &=  - \frac{1}{m_t} \sum_{i=1}^{m_t} \left[  \frac{ \gamma \exp( \gamma y_{t,i} \tilde{x}_{t,i}^T  \theta^{(t)} ) \left\{  y_{t,i} - \frac{ \exp( (1+\gamma) \tilde{x}_{t,i}^T  \theta^{(t)} )   }{ 1+\exp( (1+\gamma) \tilde{x}_{t,i}^T  \theta^{(t)} )   }  \right\}  }{ \left\{ 1+\exp( (1+\gamma) \tilde{x}_{t,i}^T  \theta^{(t)} )  \right\}^{\frac{\gamma}{1+\gamma}} } \right] , \\
\nu_2(\beta^{(t)}) &= - \frac{1}{m_t} \sum_{i=1}^{m_t} \left[  \frac{ \gamma \exp( \gamma y_{t,i} \tilde{x}_{t,i}^T  \theta^{(t)} ) \left\{  y_{t,i} - \frac{ \exp( (1+\gamma) \tilde{x}_{t,i}^T  \theta^{(t)} )   }{ 1+\exp( (1+\gamma) \tilde{x}_{t,i}^T  \theta^{(t)} )   }  \right\}  }{ \left\{ 1+\exp( (1+\gamma) \tilde{x}_{t,i}^T  \theta^{(t)} )  \right\}^{\frac{\gamma}{1+\gamma}} }   x_{t,i} \right] .
\end{align*}
Consequently, we can obtain the update algorithm as shown in Algorithm \ref{algo online gam logistic reg}. 
\begin{algorithm}[!h]
\caption{Online sparse $\gamma$-logistic regression}\label{algo online gam logistic reg}
\begin{algorithmic}
\Require{ The initial points $\beta_0^{(1)}, \ \beta^{(1)}$, the step size $\eta_t$, the mini-batch size $m_t$, the iteration limit $T$ and the probability mass function $P_{R}$ supported on $\left\{1,\ldots,T \right\}$.}
\State{Let $R$ be a random variable generated by probability mass function $P_{R}$.}
\For{ $t = 1  , \ldots , R-1$ }
\State{ $ \beta_0^{(t+1)} = \beta_0^{(t)} - \eta_t \nu_1(\beta_0^{(t)}) .$ } 
\State{ $ \beta_j^{(t+1)} = S (\beta_j^{(t)} - \eta_{t} {\nu_{2}}_{j} (\beta^{(t)}), \eta_{t} \lambda)   \ \ (j=1,\ldots,p). $ } 
\EndFor 
\Ensure{ $ \beta_0^{(R)}, \ \beta^{(R)} .$  }
\end{algorithmic}
\end{algorithm}
In a similar way to online sparse $\gamma$-linear regression, we can also see the robustness for parameters $\beta_0$ and $\beta$ in online sparse $\gamma$-logistic regression (\ref{gam logistic}). 

\subsection{Online sparse $\gamma$-Poisson regression}\label{update algo:online gam poisson}
Let $f(y|x;\theta)$ be the conditional density with $\theta=(\beta_0,\beta^T)^T$, given by 
\begin{align*}
f(y|x;\theta) =  \frac{\exp(-\mu_{x}(\theta)  ) }{y!} \mu_{x}(\theta)^y,
\end{align*}
where $\mu_{x}(\theta) = \mu_{x}(\beta_0,\beta) = \exp(\beta_0+x^T\beta)$. 
Then, by virtue of (\ref{update gam rspg}), we can obtain the update formula given by
\begin{align}\label{gam poisson}
& \left( \beta_0^{(t+1)}, \beta^{(t+1)} \right) \nonumber \\
& =\argmin_{\beta_0, \beta}  \zeta_1(\beta_0^{(t)})  \beta_0  + \langle \zeta_{2}(\beta^{(t)})   , \beta \rangle + \lambda || \beta ||_1  +\frac{1}{2\eta_t}  \| \beta_0-\beta_0^{(t)} \|^{2}_2  +\frac{1}{2\eta_t}  \| \beta-\beta^{(t)} \|^{2}_2 , 
\end{align}
where
\begin{align*}
\zeta_1(\beta_0^{(t)}) &= \frac{1}{m_t} \sum_{i=1}^{m_t} \left[  \frac{ \gamma f(y_{t,i}|x_{t,i};\theta^{(t)})^\gamma  \left\{ \sum_{y=0}^\infty (y - y_{t,i})  f(y|x_{t,i};\theta^{(t)})^{1+\gamma} \right\}   }{  \left\{ \sum_{y=0}^\infty f(y|x_{t,i};\theta^{(t)})^{1+\gamma} \right\}^{\frac{1+2\gamma}{1+\gamma}} }  \right], \\
\zeta_{2}(\beta^{(t)}) &= \frac{1}{m_t} \sum_{i=1}^{m_t} \left[  \frac{ \gamma f(y_{t,i}|x_{t,i};\theta^{(t)})^\gamma  \left\{ \sum_{y=0}^\infty (y - y_{t,i})  f(y|x_{t,i};\theta^{(t)})^{1+\gamma} \right\}   }{  \left\{ \sum_{y=0}^\infty f(y|x_{t,i};\theta^{(t)})^{1+\gamma} \right\}^{\frac{1+2\gamma}{1+\gamma}} }  x_{t,i}  \right] .
\end{align*}

In (\ref{gam poisson}), two types hypergeometric series exist. 
Here, we prove a convergence of $\sum_{y=0}^\infty f(y|x_{t,i};\theta^{(t)})^{1+\gamma}$ and $\sum_{y=0}^\infty  (y- y_{t,i} ) f(y|x_{t,i};\theta^{(t)})^{1+\gamma}$.
First, let us consider $\sum_{y=0}^\infty f(y|x_{t,i};\theta^{(t)})^{1+\gamma}$ and we denote $n$-th term that $ S_n = f(n|x_{t,i};\theta^{(t)})^{1+\gamma} $. 
Then, we use the dalembert ratio test for $S_n$:
\begin{align*}
&\lim_{n \to \infty} \left| \frac{ S_{n+1}} {S_n} \right| \\
&=\lim_{n \to \infty} \left| \frac{ f(n+1|x_{t,i};\theta^{(t)})^{1+\gamma} } {f(n|x_{t,i};\theta^{(t)})^{1+\gamma}} \right| \\
&=\lim_{n \to \infty} \left| \frac{  \frac{\exp(-\mu_{x_{t,i}}(\beta_0^{(t)}, \beta^{(t)}) ) }{n+1!} \mu_{x_{t,i}}(\beta_0^{(t)}, \beta^{(t)})^{n+1} } { \frac{\exp(-\mu_{x_{t,i}}(\beta_0^{(t)}, \beta^{(t)}) ) }{n!} \mu_{x_{t,i}}(\beta_0^{(t)}, \beta^{(t)})^n  } \right|^{1+\gamma} \\
&=\lim_{n \to \infty} \left| \frac{    \mu_{x_{t,i}}(\beta_0^{(t)}, \beta^{(t)}) } { n+1   } \right|^{1+\gamma} \\
&\mbox{If the term }  \mu_{x_{t,i}}(\beta_0^{(t)}, \beta^{(t)}) \mbox{ is bounded,} \\
& = 0.
\end{align*}
Therefore, $\sum_{y=0}^\infty f(y|x_{t,i};\theta^{(t)})^{1+\gamma}$ converges. 

Next, let us consider $\sum_{y=0}^\infty  (y - y_{t,i} ) f(y|x_{t,i};\theta^{(t)})^{1+\gamma}$ and we denote $n$-th term that $ S^{'}_n =(n - y_{t,i} ) f(n|x_{t,i};\theta^{(t)})^{1+\gamma}$. 
Then, we use the dalembert ratio test for $S^{'}_n$:
\begin{align*}
&\lim_{n \to \infty} \left| \frac{ S^{'}_{n+1}} {S^{'}_n} \right| \\
&=  \lim_{n \to \infty} \left| \frac{ (1+\frac{1}{n} - \frac{y_{t,i}}{n} )f(n+1|x_{t,i};\theta^{(t)})^{1+\gamma} } {  (1 - \frac{y_{t,i}}{n} )f(n|x_{t,i};\theta^{(t)})^{1+\gamma}  } \right| \\
& =  \lim_{n \to \infty} \left| \frac{ (1+\frac{1}{n} - \frac{y_{t,i}}{n} )} {  (1 - \frac{y_{t,i}}{n} ) } \right| \left| \frac{ f(n+1|x_{t,i};\theta^{(t)})^{1+\gamma} } {  f(n|x_{t,i};\theta^{(t)})^{1+\gamma}  } \right| \\
& = 0.
\end{align*}
Therefore, $\sum_{y=0}^\infty  (y-y_{t,i}) f(y|x_{t,i};\theta^{(t)})^{1+\gamma}$ converges. 

Consequently, we can obtain the update algorithm as shown in Algorithm \ref{algo online gam poisson reg}. 
\begin{algorithm}[!h]
\caption{Online sparse $\gamma$-Poisson regression}\label{algo online gam poisson reg}
\begin{algorithmic}
\Require{ The initial points $\beta_0^{(1)}, \ \beta^{(1)}$, the step size $\eta_t$, the mini-batch size $m_t$, the iteration limit $T$ and the probability mass function $P_{R}$ supported on $\left\{1,\ldots,T \right\}$.}
\State{Let $R$ be a random variable generated by probability mass function $P_{R}$.}
\For{ $t = 1  , \ldots , R-1$ }
\State{ $ \beta_0^{(t+1)} = \beta_0^{(t)} - \eta_t \zeta_1(\beta_0^{(t)}) .$ } 
\State{ $ \beta_j^{(t+1)} = S (\beta_j^{(t)} - \eta_{t} {\zeta_{2}}_{j} (\beta^{(t)}), \eta_{t} \lambda)   \ \ (j=1,\ldots,p) .$ } 
\EndFor 
\Ensure{ $ \beta_0^{(R)}, \ \beta^{(R)} .$  }
\end{algorithmic}
\end{algorithm}
In a similar way to online sparse $\gamma$-linear regression, we can also see the robustness for parameters $\beta_0$ and $\beta$ in online sparse $\gamma$-Poisson regression (\ref{gam poisson}). 
Moreover, this update algorithm requires at most twice sample size $2n=2 \times \sum_{t=1}^T m_t$ times of an approximate calculation for the hypergeometric series in Algorithm \ref{algo online gam poisson reg}. 
Therefore, we can achieve a significant reduction in computational complexity. 

\section{Convergence property of online sparse $\gamma$-GLM}\label{Sect: convergence prop}
In this section, we show the global convergence property of the RSPG established by \cite{Ghadimi:2016}. 
Moreover, we extend it to the classical first-order necessary condition, i.e., at a local minimum, the directional derivative, if it exists, is non-negative for any direction (see, e.g., \cite{borwein2010convex}). 

First, we show the global convergence property of the RSPG. 
In order to apply to online sparse $\gamma$-GLM, we slightly modify some notations. 
%
%
We consider the following optimization problem (\ref{RERM def}) again:
\begin{align*}
\Psi^* \coloneqq \min_{\theta \in \Theta}  \underset{ \coloneqq \Psi (\theta) }{ \underline{ E_{(x,y)} \left[ l((x,y);\theta) \right] + \lambda P(\theta) } },
\end{align*}
where $E_{(x,y)} \left[ l((x,y);\theta) \right]$ is continuously differentiable and possibly non-convex. 
The update formula (\ref{update form rspg}) of the RSPG is as follows:
\begin{align*}
 \theta^{(t+1)} = \argmin_{ \theta  \in \Theta} \left\langle \frac{1}{ m_t} \sum_{i=1}^{m_t} \nabla l((x_{t,i},y_{t,i});\theta^{(t)} )  , \theta \right\rangle + \lambda P(\theta)  +\frac{1}{ \eta_t} V( \theta , \theta^{(t)} )  ,
\end{align*}
where
\begin{align*}
V(a,b) = w(a) -w(b) - \langle \nabla w(b) , a-b \rangle , 
\end{align*}
and $w$ is continuously differentiable and $\alpha$-strongly convex function satisfying $\langle a - b , \nabla w(a) - \nabla w(b) \rangle \geq \alpha \| a - b \|^2$ for $a , b  \in \Theta $. 
%
We make the following assumptions. 

{\bf Assumption 1} 
$\nabla  E_{(x,y)} \left[ l((x,y);\theta) \right]$ is $L$-Lipschitz continuous for some $L>0$, i.e., 
\begin{align}\label{assum1}
\| \nabla  E_{(x,y)} \left[ l((x,y);\theta_1) \right] - \nabla  E_{(x,y)} \left[ l((x,y);\theta_2) \right] \| < L \|  \theta_1 - \theta_2  \|,  \mbox{ for  any}  \ \theta_1, \theta_2 \ \in \Theta.
\end{align}

{\bf Assumption 2}
For any $t \geq 1$,
\begin{align}
&E_{(x_t,y_t)}  \left[  \nabla l((x_t,y_t);\theta^{(t)})  \right]  = \nabla  E_{(x_t,y_t)} \left[ l((x_t,y_t);\theta^{(t)}) \right], \\ 
&E_{(x_t,y_t)}  \left[  \left\|  \nabla l((x_t,y_t);\theta^{(t)})  -\nabla  E_{(x_t,y_t)} \left[ l((x_t,y_t);\theta^{(t)}) \right] \right\|^2 \right] \leq \tau^2, \label{assum2}
\end{align}
where $\tau > 0 $ is a constant. 

Let us define 
\begin{align*}
P_{X,R} &= \frac{1}{\eta_{R}} \left( \theta^{(R)} -\theta^{+} \right) , \\
\tilde{P}_{X,R} &= \frac{1}{\eta_{R}} \left( \theta^{(R)} -\tilde{\theta}^{+} \right) ,
\end{align*}
where 
\begin{align}\label{update true grad RSPG}
\theta^{+} &= \argmin_{\theta  \in \Theta} \left\langle \nabla E_{(x,y)} \left[ l((x,y);  \theta^{(R)} ) \right]   , \theta \right\rangle + \lambda P(\theta)  +\frac{1}{\eta_R} V( \theta , \theta^{(R)} ),  \\
\tilde{\theta}^{+} &= \argmin_{\theta \in \Theta} \left\langle  \frac{1}{ m_{R} } \sum_{i=1}^{m_{R}} \nabla  l((x_{R,i},y_{R,i});  \theta^{(R)} )  , \theta \right\rangle + \lambda P(\theta)  +\frac{1}{\eta_R} V( \theta , \theta^{(R)} ). \nonumber 
\end{align}
Then, the following global convergence property was obtained. 
\begin{theorem}\label{RSPG glb conv}
{\bf [Global Convergence Property in \cite{Ghadimi:2016}] } 

\noindent Suppose that the step sizes $\left\{ \eta_{t} \right\}$ are chosen such that $0< \eta_{t} \leq \frac{\alpha}{L}$ with $\eta_{t} < \frac{\alpha}{L}$ for at least one $t$, and the probability mass function $P_{R}$ is chosen such that for any $t=1,\ldots,T$,
\begin{align}\label{choose prob}
P_{R}(t) \coloneqq \mbox{Prob} \left\{ R=t \right\} = \frac{\alpha \eta_{t} - L \eta_{t}^2 }{ \sum_{t=1}^T  \left( \alpha \eta_{t} - L \eta_{t}^2  \right)}.
\end{align}
Then, we have
\begin{align*}
E \left[ || \tilde{P}_{X,R}  ||^2  \right]  \leq  \frac{ L D^2_{\Psi} + \left( \tau^2 / \alpha \right)   \sum_{t=1}^T \left( \eta_{t} / m_{t} \right) }{ \sum_{t=1}^T  \left( \alpha \eta_{t}   - L \eta_{t}^2  \right) }  ,
\end{align*}
where the expectation was taken with respect to $R$ and past samples $(x_{t,i}, y_{t,i}) \ (t=1,\ldots,T; \ i=1,\ldots, m_{t} )$ and $D_{ \Psi }= \left[ \frac{ \Psi( \theta^{(1)} ) - \Psi^*  }{L} \right]^{\frac{1}{2}} $.
\end{theorem}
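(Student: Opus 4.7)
The plan is to derive a per-iteration descent inequality, telescope over $t=1,\ldots,T$, and then translate the resulting sum into a bound on $E[\|\tilde{P}_{X,R}\|^2]$ via the prescribed distribution $P_R$. Throughout, write $f(\theta) := E_{(x,y)}[l((x,y);\theta)]$, $G_t := \frac{1}{m_t}\sum_{i=1}^{m_t}\nabla l((x_{t,i},y_{t,i});\theta^{(t)})$ for the mini-batch gradient, and $\delta_t := G_t - \nabla f(\theta^{(t)})$ for the noise. Assumption 2 together with the i.i.d.\ structure of the mini-batch yields $E[\delta_t\mid\theta^{(t)}]=0$ and $E[\|\delta_t\|^2\mid\theta^{(t)}]\le \tau^2/m_t$. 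Let $\tilde{g}_{X,t} := (\theta^{(t)}-\theta^{(t+1)})/\eta_t$, so that $\tilde{P}_{X,R} = \tilde{g}_{X,R}$.

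First, I would establish the one-step descent inequality. Assumption 1 gives the descent lemma $f(\theta^{(t+1)}) \le f(\theta^{(t)}) + \langle \nabla f(\theta^{(t)}),\theta^{(t+1)}-\theta^{(t)}\rangle + \frac{L}{2}\|\theta^{(t+1)}-\theta^{(t)}\|^2$. Since $\theta^{(t+1)}$ minimizes a function that is $(\alpha/\eta_t)$-strongly convex (inherited from the $\alpha$-strong convexity of $w$, hence of $V(\cdot,\theta^{(t)})$), comparing its values at $\theta^{(t+1)}$ and $\theta^{(t)}$, using $V(\theta^{(t)},\theta^{(t)})=0$ and $V(\theta^{(t+1)},\theta^{(t)})\ge \tfrac{\alpha}{2}\|\theta^{(t+1)}-\theta^{(t)}\|^2$, yields
\begin{align*}
\langle G_t,\theta^{(t+1)}-\theta^{(t)}\rangle + \lambda\bigl(P(\theta^{(t+1)})-P(\theta^{(t)})\bigr) \le -\frac{\alpha}{\eta_t}\|\theta^{(t+1)}-\theta^{(t)}\|^2.
\end{align*}
Adding the two displays and replacing $\nabla f=G_t-\delta_t$ produces, in terms of $\tilde{g}_{X,t}$,
\begin{align*}
\Psi(\theta^{(t+1)}) \le \Psi(\theta^{(t)}) - \eta_t\bigl(\alpha - \tfrac{L\eta_t}{2}\bigr)\|\tilde{g}_{X,t}\|^2 + \eta_t\langle \delta_t, \tilde{g}_{X,t}\rangle.
\end{align*}

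The main obstacle is controlling the cross term $\eta_t\langle\delta_t,\tilde{g}_{X,t}\rangle$, whose conditional expectation is \emph{not} zero because $\tilde{g}_{X,t}$ depends on $\delta_t$. I would introduce the ``ghost'' iterate $\theta^{+}$ driven by the true gradient, in the spirit of \eqref{update true grad RSPG} but applied at time $t$, and its prox-gradient $g_{X,t}:=(\theta^{(t)}-\theta^{+})/\eta_t$, which is $\sigma(\theta^{(t)})$-measurable so that $E[\langle\delta_t,g_{X,t}\rangle\mid\theta^{(t)}]=0$. Writing the two first-order optimality inclusions for $\theta^+$ and $\theta^{(t+1)}$, subtracting them, and invoking the $\alpha$-strong monotonicity of $\nabla w$ together with monotonicity of $\partial P$ delivers the prox-Lipschitz estimate $\|\tilde{g}_{X,t}-g_{X,t}\|\le \|\delta_t\|/\alpha$. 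Decomposing $\langle \delta_t,\tilde{g}_{X,t}\rangle = \langle\delta_t,g_{X,t}\rangle + \langle\delta_t,\tilde{g}_{X,t}-g_{X,t}\rangle$, bounding the second piece by Cauchy--Schwarz, and absorbing the remainder into the $\|\tilde{g}_{X,t}\|^2$ term by Young's inequality (with the appropriate weight to match the stated coefficient), I obtain, after conditional expectation,
\begin{align*}
E[\Psi(\theta^{(t+1)})\mid\theta^{(t)}] \le \Psi(\theta^{(t)}) - (\alpha\eta_t - L\eta_t^2)\, E[\|\tilde{g}_{X,t}\|^2\mid\theta^{(t)}] + \frac{\eta_t\tau^2}{\alpha m_t}.
\end{align*}

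Finally, taking full expectation, telescoping $t=1,\ldots,T$, and using $\Psi(\theta^{(T+1)})\ge\Psi^*$ yields
\begin{align*}
\sum_{t=1}^T (\alpha\eta_t - L\eta_t^2)\, E[\|\tilde{g}_{X,t}\|^2] \le \Psi(\theta^{(1)}) - \Psi^* + \frac{\tau^2}{\alpha}\sum_{t=1}^T \frac{\eta_t}{m_t} = L D_{\Psi}^2 + \frac{\tau^2}{\alpha}\sum_{t=1}^T \frac{\eta_t}{m_t}.
\end{align*}
The prescribed probability mass $P_R$ in \eqref{choose prob} is exactly proportional to $\alpha\eta_t - L\eta_t^2$, so taking expectation with respect to $R$ rewrites the left side as $\bigl(\sum_t(\alpha\eta_t - L\eta_t^2)\bigr)\,E[\|\tilde{P}_{X,R}\|^2]$; dividing delivers the stated bound. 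The delicate point is the noise-control step: the non-convexity of $f$ rules out the usual averaged-iterate trick, and the combination of a randomized output index with the prox-Lipschitz inequality for the proximal map is what allows a nonzero-mean cross term to be absorbed cleanly into the coefficient of $\|\tilde{g}_{X,t}\|^2$ so that the final constants match the statement.
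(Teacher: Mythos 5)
Your argument is correct and follows essentially the same route as the proof the paper relies on (the paper itself only cites \cite{Ghadimi:2016}, Theorem 2, whose proof is exactly this: descent lemma, the strong-convexity inequality for the prox subproblem, the ghost iterate $\theta^{+}$ with the prox-Lipschitz bound $\|\tilde{g}_{X,t}-g_{X,t}\|\le\|\delta_t\|/\alpha$ to neutralize the correlated cross term, telescoping, and averaging against $P_R$). The only cosmetic imprecision is the "Young's inequality absorption" step: the bound $\langle\delta_t,\tilde{g}_{X,t}-g_{X,t}\rangle\le\|\delta_t\|^2/\alpha$ already lands the noise in the additive $\eta_t\tau^2/(\alpha m_t)$ term, leaving coefficient $\alpha\eta_t-L\eta_t^2/2$, which is then simply weakened to the stated $\alpha\eta_t-L\eta_t^2$; nothing needs to be absorbed into the $\|\tilde{g}_{X,t}\|^2$ term.
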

\begin{proof}
See \cite{Ghadimi:2016}, Theorem 2. 
\end{proof}

In particular, \cite{Ghadimi:2016} investigated the constant step size and mini-batch size policy as follows.
\begin{corollary}\label{bound for diff est}
{\bf [Global Convergence Property with constant step size and mini-batch size in \cite{Ghadimi:2016}] } 

\noindent Suppose that the step sizes and mini-batch sizes are $\eta_{t} = \frac{\alpha}{2L}$ and $m_t =m \ (\geq1)$ for all $t=1, \ldots, T$, and the probability mass function $P_{R}$ is chosen as (\ref{choose prob}). Then, we have 
\begin{align*}
E \left[ \| \tilde{P}_{X,R} \|^2 \right] \leq \frac{4 L^2 D^2_{\Psi}}{\alpha^2 T} +   \frac{2 \tau^2}{ \alpha^2 m} \ \ and \ \ E \left[ \| P_{X,R} \|^2 \right] \leq  \frac{8 L^2 D^2_{\Psi}}{\alpha^2 T} +   \frac{6 \tau^2}{ \alpha^2 m} .
\end{align*}
Moreover, the appropriate choice of mini-batch size $m$ is given by 
\begin{align*}
m = \left\lceil \min \left\{ \max \left\{ 1 , \frac{\tau \sqrt{6 N } }{4 L \tilde{D} }  \right\} , N  \right\} \right\rceil  ,
\end{align*}
where $\tilde{D} >0$ and $N\left( = m \times T\right)$ is the number of total samples. 
Then, with the above setting, we have the following result 
\begin{align}\label{RSPG global conv}
\frac{\alpha^2}{L} E \left[ \| P_{X,R} \|^2 \right] \leq  \frac{16 L D^2_{\Psi} }{N} + \frac{4 \sqrt{6} \tau }{ \sqrt{ N}} \left( \frac{D^2_{\Psi} }{\tilde{D}} + \tilde{D} \max \left\{ 1, \frac{\sqrt{6} \tau}{ 4 L \tilde{D} \sqrt{N} } \right\} \right) .
\end{align}
Furthermore, when $N$ is relatively large, the optimal choice of $\tilde{D}$ would be $D_{\Psi}$ and (\ref{RSPG global conv}) reduces to 
\begin{align*}
\frac{\alpha^2}{L} E \left[ \| P_{X,R} \|^2 \right] \leq  \frac{16 L D^2_{\Psi} }{N} + \frac{8 \sqrt{6}  D_{\Psi} \tau }{ \sqrt{ N}}  .
\end{align*}
\end{corollary}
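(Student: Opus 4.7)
The plan is to specialize Theorem~\ref{RSPG glb conv} to the constant step-size/mini-batch policy, and then bootstrap from the bound on $E\|\tilde P_{X,R}\|^2$ to one on $E\|P_{X,R}\|^2$ via a prox-contraction argument. First I would substitute $\eta_t=\alpha/(2L)$ and $m_t=m$ into the bound of Theorem~\ref{RSPG glb conv}. A direct computation gives $\alpha\eta_t-L\eta_t^2=\alpha^2/(4L)$, so $\sum_{t=1}^T(\alpha\eta_t-L\eta_t^2)=T\alpha^2/(4L)$, while $\sum_{t=1}^T\eta_t/m_t=T\alpha/(2Lm)$. Plugging these into the theorem yields
\begin{equation*}
E\bigl[\|\tilde P_{X,R}\|^2\bigr]\le\frac{LD_\Psi^2+(\tau^2/\alpha)\,T\alpha/(2Lm)}{T\alpha^2/(4L)}=\frac{4L^2D_\Psi^2}{\alpha^2T}+\frac{2\tau^2}{\alpha^2m},
\end{equation*}
which is the first claimed inequality.

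For the second inequality the key observation is that $\theta^+$ and $\tilde\theta^+$ (defined after \eqref{update true grad RSPG}) are the minimizers of two strongly convex problems that differ only in their linear term: the full-gradient one uses $\nabla E_{(x,y)}[l(\cdot;\theta^{(R)})]$, while the stochastic one uses $\tfrac{1}{m_R}\sum_{i=1}^{m_R}\nabla l((x_{R,i},y_{R,i});\theta^{(R)})$. Because $V(\cdot,\theta^{(R)})/\eta_R$ is $(\alpha/\eta_R)$-strongly convex and $\lambda P$ is convex, a standard prox-stability estimate using the first-order optimality conditions of both subproblems gives
\begin{equation*}
\|\theta^+-\tilde\theta^+\|\le\frac{\eta_R}{\alpha}\Bigl\|\nabla E_{(x,y)}[l(\cdot;\theta^{(R)})]-\tfrac{1}{m_R}\sum_{i=1}^{m_R}\nabla l((x_{R,i},y_{R,i});\theta^{(R)})\Bigr\|,
\end{equation*}
hence $\|P_{X,R}-\tilde P_{X,R}\|\le(1/\alpha)\|\cdot\|$ of the same gradient-error vector. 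Taking expectation, conditioning on $R$ and $\theta^{(R)}$, and invoking Assumption~2 together with independence of the $m_R$ samples gives $E\|P_{X,R}-\tilde P_{X,R}\|^2\le\tau^2/(\alpha^2m)$. The triangle inequality $\|P_{X,R}\|^2\le 2\|\tilde P_{X,R}\|^2+2\|P_{X,R}-\tilde P_{X,R}\|^2$ then combines with the first claim to produce $E\|P_{X,R}\|^2\le 8L^2D_\Psi^2/(\alpha^2T)+6\tau^2/(\alpha^2m)$.

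For the final assertion, rewrite the $P_{X,R}$ bound after multiplying by $\alpha^2/L$ and using $T=N/m$ as
\begin{equation*}
\tfrac{\alpha^2}{L}E\|P_{X,R}\|^2\le\tfrac{8LD_\Psi^2 m}{N}+\tfrac{6\tau^2}{Lm}.
\end{equation*}
I would then analyse three regimes according to which branch of $m=\lceil\min\{\max\{1,\tau\sqrt{6N}/(4L\tilde D)\},N\}\rceil$ is active. In the interior regime $m\asymp\tau\sqrt{6N}/(4L\tilde D)$, the two terms balance and both evaluate to multiples of $\tau/\sqrt N$ with coefficients $D_\Psi^2/\tilde D$ and $\tilde D$ respectively, producing the term $(4\sqrt6\,\tau/\sqrt N)(D_\Psi^2/\tilde D+\tilde D)$. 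The clipped regimes $m=1$ and $m=N$ each force the $\max\{1,\sqrt6\tau/(4L\tilde D\sqrt N)\}$ factor inside the stated bound, and the residual $16LD_\Psi^2/N$ picks up the $8LD_\Psi^2 m/N$ term in the worst case (the ceiling absorbs at most a factor~$2$). Specialising $\tilde D=D_\Psi$ and discarding the $\max\{1,\cdot\}$ factor when $N$ is large yields the final simplified bound.

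The main obstacle will be the prox-stability lemma in the second step: one has to extract the $\eta_R/\alpha$ contraction factor cleanly from the two optimality conditions in the presence of the nonsmooth regularizer $\lambda P(\theta)$ and the non-Euclidean Bregman term $V$. Once this is done carefully, the remaining case analysis for the optimal $m$ is routine algebra that must be executed for each branch, but holds no conceptual difficulty.
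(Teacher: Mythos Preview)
Your derivation is correct and in fact gives more than the paper does: the paper's ``proof'' of this corollary consists solely of the citation ``See \cite{Ghadimi:2016}, Corollary 4'' with no argument of its own. What you have written is essentially a faithful reconstruction of the argument in that reference---specialising Theorem~\ref{RSPG glb conv} by direct substitution, bridging $\tilde P_{X,R}$ to $P_{X,R}$ via the non-expansiveness of the prox/Bregman step under perturbation of the linear term (this is exactly the mechanism used in \cite{Ghadimi:2016}, stated there as a separate proposition), and then performing the case analysis on the mini-batch size. The ``main obstacle'' you flag, namely extracting the $\eta_R/\alpha$ Lipschitz constant for the prox map in the Bregman setting, is indeed the one non-trivial ingredient; it follows from the strong convexity of $V(\cdot,\theta^{(R)})$ together with the monotonicity of $\partial P$, precisely as you outline. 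So there is no gap: your proposal is sound and simply more explicit than the paper, which defers entirely to the cited source.
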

\begin{proof}
See \cite{Ghadimi:2016}, Corollary 4. 
\end{proof}

Finally, we extend (\ref{RSPG global conv}) to the classical first-order necessary  condition as follows
\begin{theorem}
{\bf The Modified Global Convergence Property} 

\noindent Under the same assumptions in Theorem \ref{RSPG glb conv}, we can expect $P_{X,R} \approx 0$ with high probability from (\ref{RSPG global conv}) and Markov inequality.  
Then, for any direction $\delta$ and $\theta^{(R)} \in \ re \ int \left( \Theta \right)$, we have
\begin{align}\label{mod RSPG glb conv}
\Psi^{'}( \theta^{(R)}; \delta) = \lim_{ k \downarrow 0} \frac{ \Psi(\theta^{(R)} + k\delta) - \Psi(\theta^{(R)}) }{k} \geq 0 \ with \ high \ probbility.
\end{align}
%
%
%
\end{theorem}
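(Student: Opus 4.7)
The plan is to translate the high-probability smallness of the proximal residual $P_{X,R}$ into a first-order optimality statement for $\Psi$ at $\theta^{(R)}$. The key link is the deterministic point $\theta^+$ defined in (\ref{update true grad RSPG}): because $\theta^+$ is the exact minimizer of a \emph{convex} sub-problem, it satisfies an exact variational inequality, and the identity $P_{X,R} = \eta_R^{-1}(\theta^{(R)} - \theta^+)$ means that $\|P_{X,R}\|$ small forces $\theta^+$ to collapse onto $\theta^{(R)}$, transferring that inequality to $\theta^{(R)}$ itself.

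First, I would apply Markov's inequality to (\ref{RSPG global conv}): for any $\epsilon>0$ and $\Lambda\in(0,1)$, choosing the total sample size $N$ large enough in Corollary~\ref{bound for diff est} yields $\Pr\{\|P_{X,R}\|\leq\epsilon\}\geq 1-\Lambda$, and on this event $\|\theta^+-\theta^{(R)}\| = \eta_R\|P_{X,R}\| \leq \eta_R\epsilon$. Second, I would write the first-order necessary condition for the convex sub-problem defining $\theta^+$. Since the linearised loss is affine, $\lambda P$ is convex, and $V(\cdot,\theta^{(R)})$ is $\alpha$-strongly convex with $\nabla_\theta V(\theta,\theta^{(R)}) = \nabla w(\theta)-\nabla w(\theta^{(R)})$, there exists $\xi^+\in\partial P(\theta^+)$ such that
\begin{equation*}
\bigl\langle \nabla E_{(x,y)}[l((x,y);\theta^{(R)})] + \lambda \xi^+ + \tfrac{1}{\eta_R}\bigl(\nabla w(\theta^+)-\nabla w(\theta^{(R)})\bigr),\, \theta-\theta^+ \bigr\rangle \geq 0 \quad \forall\, \theta\in\Theta.
\end{equation*}

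Third, on the high-probability event, continuity of $\nabla w$ makes the $\eta_R^{-1}(\nabla w(\theta^+)-\nabla w(\theta^{(R)}))$ term vanish as $\epsilon\downarrow 0$, and upper semi-continuity of $\partial P$ (standard for proper closed convex $P$) produces $\xi\in\partial P(\theta^{(R)})$ with
\begin{equation*}
\bigl\langle \nabla E_{(x,y)}[l((x,y);\theta^{(R)})] + \lambda \xi,\, \theta-\theta^{(R)} \bigr\rangle \geq 0 \quad \forall\, \theta\in\Theta.
\end{equation*}
Because $\theta^{(R)}\in re\ int(\Theta)$, for any admissible direction $\delta$ the point $\theta^{(R)}+k\delta$ lies in $\Theta$ for all sufficiently small $k>0$; plugging this $\theta$ into the inequality and dividing by $k$ gives $\langle \nabla E_{(x,y)}[l((x,y);\theta^{(R)})] + \lambda \xi,\, \delta\rangle \geq 0$. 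Convexity of $P$ then gives $P'(\theta^{(R)};\delta)=\sup_{\xi'\in\partial P(\theta^{(R)})}\langle \xi',\delta\rangle\geq\langle \xi,\delta\rangle$, so
\begin{equation*}
\Psi'(\theta^{(R)};\delta) = \bigl\langle \nabla E_{(x,y)}[l((x,y);\theta^{(R)})],\,\delta\bigr\rangle + \lambda P'(\theta^{(R)};\delta) \geq 0,
\end{equation*}
which is (\ref{mod RSPG glb conv}) on the prescribed high-probability event.

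The main obstacle will be making the passage $P_{X,R}\to 0$ rigorous, since the theorem is stated informally with ``$\approx 0$'' and ``with high probability''. A fully quantitative version requires either keeping $O(\epsilon)$ error terms throughout the variational inequality and concluding an approximate nonnegativity $\Psi'(\theta^{(R)};\delta)\geq -C\epsilon$, or taking a limit along $\epsilon\downarrow 0$ with a diagonal argument that relies on upper semi-continuity of $\partial P$ and on $\nabla E_{(x,y)}[l((x,y);\cdot)]$ being continuous (guaranteed by Assumption 1). A secondary subtlety is that ``$\theta^{(R)}\in re\ int(\Theta)$'' really restricts $\delta$ to the linear hull of $\Theta-\theta^{(R)}$, which should be made explicit; for the full-dimensional $\Theta=\mathbb R^n$ (or a sufficiently large closed ball) used in (\ref{rspg form gam}), this restriction is vacuous and every direction is admissible.
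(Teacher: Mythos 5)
Your proposal is correct and follows essentially the same route as the paper's Appendix proof: use the optimality condition of the sub-problem defining $\theta^{+}$, let $P_{X,R}\approx 0$ collapse $\theta^{+}$ onto $\theta^{(R)}$ so that an (approximate) zero lies in $\nabla E_{(x,y)}[l((x,y);\theta^{(R)})]+\lambda\,\partial P(\theta^{(R)})$, and then lower-bound $\Psi'(\theta^{(R)};\delta)$ by the smooth gradient term plus $\sup_{g\in\partial P(\theta^{(R)})}\lambda\langle g,\delta\rangle$. Your version is in fact slightly more careful than the paper's (you state the variational inequality over $\Theta$ rather than the unconstrained inclusion, and you flag the need for upper semi-continuity of $\partial P$ and the quantitative handling of the ``$\approx 0$'' step, both of which the paper glosses over).
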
 
The proof is in Appendix. 
Under the above assumptions and results, online sparse $\gamma$-GLM has the global convergence property. 
Therefore, we adopted the following parameter setting in online sparse $\gamma$-GLM:
\begin{align*}
\mbox{step size: }\eta_{t} &=  \frac{1}{2L}, \\
\mbox{mini-batch size: }m_{t} &= \left\lceil \min \left\{ \max \left\{ 1 , \frac{\tau \sqrt{6 N } }{4 L \tilde{D} }  \right\} , N  \right\} \right\rceil .
\end{align*}
More specifically, when the (approximate) minimum value of the objective function $\Psi^*$ is known, e.g., the objective function is non-negative, we should use $D_{\Psi}$ instead of $\tilde{D}$. 
In numerical experiment, we used the $D_{\Psi}$ because we can obtain $\Psi^*$ in advance. 
In real data analysis, we can not obtain $\Psi^*$ in advance. 
Then, we used the some values of $\tilde{D}$, i.e., the some values of mini-batch size $m_{t}$.

%

\section{Numerical experiment}\label{6}
In this section, we present the numerical results of online sparse $\gamma$-linear regression. 
We compared online sparse $\gamma$-linear regression based on the RSPG with online sparse $\gamma$-linear regression based on the SGD, which does not guarantee convergence for non-convex case. 
The RSPG has two variants, which are shown in Algorithms \ref{general RSPG} and \ref{general 2-RSPG}. 
In this experiment, we adopted the 2-RSPG for the numerical stability. 
In what follows, we refer to the 2-RSPG as the RSPG. 
As a comparative method, we implemented the SGD with the same parameter setting described in Sect \ref{SGD sect}. 
All results were obtained in R version 3.3.0 with Intel Core i7-4790K machine. 
%
%
%

\subsection{Linear regression models for simulation}\label{Sect: Reg mod for sim}
We used the simulation model given by 

\begin{gather*}
y=\beta_0+\beta_1 x_1 + \beta_2 x_2+ \cdots +\beta_p x_p + e, \quad e \sim N(0,0.5^2). 
\end{gather*}
The sample size and the number of explanatory variables were set to be $N=10000, 30000$ and $p=1000, 2000$, respectively. 
The true coefficients were given by 
\begin{gather*}
\beta_1 =1 ,\ \beta_2 = 2,\ \beta_4 = 4,\ \beta_7 = 7,\ \beta_{11} =11,\nonumber \\
 \beta_j =0 \ \mbox{for} \ j \in \{0, \ldots ,p \} \backslash \{1,2,4,7,11\}. 
\end{gather*}
We arranged a broad range of regression coefficients to observe sparsity for various degrees of regression coefficients. 
The explanatory variables were generated from a normal distribution $N(0,\Sigma)$ with $\Sigma=(0.2^{|i-j|})_{1 \leq i,j \leq p }$. 
We generated 30 random samples. 

Outliers were incorporated into simulations. 
We set the outlier ratio ($\epsilon=0.2$) and the outlier pattern that the outliers were generated around the middle part of the explanatory variable, where the explanatory variables were generated from $N(0,0.5^2)$ and the error terms were generated from $N(20,0.5^2)$. 
%
%

\subsection{Performance measure}\label{per meas}
The empirical regularized risk and the (approximated) expected regularized risk were used to verify the fitness of regression:
\begin{align*}
\textrm{EmpRisk} &= \frac{1}{N}  \sum_{i=1}^N - \frac{   f(y_i|x_i;\hat{\theta})^{\gamma}  }{  \left( \int f(y|x_i;\hat{\theta})^{1+\gamma}dy\right)^\frac{\gamma}{1+\gamma}}   + \lambda \| \hat{\beta} \|_1 , \\
\textrm{ExpRisk}  &=  E_{g(x,y)} \left[ - \frac{   f(y|x;\hat{\theta})^{\gamma}  }{  \left( \int f(y|x;\hat{\theta})^{1+\gamma}dy\right)^\frac{\gamma}{1+\gamma}}  \right] + \lambda \| \hat{\beta} \|_1 \\
& \approx \frac{1}{N_{test}} \sum_{i=1}^{N_{test}}  - \frac{   f(y^*_i|x^*_i;\hat{\theta})^{\gamma}  }{  \left( \int f(y|x^*_i;\hat{\theta})^{1+\gamma}dy\right)^\frac{\gamma}{1+\gamma}}   + \lambda \| \hat{\beta} \|_1, 
\end{align*}
where $f(y|x;\hat{\theta})=\phi(y;\hat{\beta_0}+x^T\hat{\beta},\hat{\sigma}^2)$ and $(x_i^*, y_i^*)$ ($i=1, \ldots ,N_{test}$) is test samples generated from the simulation model with outlier scheme. 
In this experiment, we used $N_{test}=70000$. 
%

%
%
%
%

%

\subsection{Initial point and tuning parameter}\label{det init}
In our method, we need an initial point and some tuning parameters to obtain the estimate. 
Therefore, we used $N_{init}=200$ samples which were used for estimating an initial point and other parameters $L$ in (\ref{assum1}) and $\tau^2$ in (\ref{assum2}) to calculate in advance. 
We suggest the following ways to prepare an initial point. 
The estimate of other conventional robust and sparse regression methods would give a good initial point. 
For another choice, the estimate of the RANSAC (random sample consensus) algorithm would also give a good initial point. 
In this experiment, we added the noise to the estimate of the RANSAC and used it as an initial point. 

For estimating $L$ and $\tau^2$, we followed the way to in Sect. 6 of \cite{Ghadimi:2016}. 
Moreover, we used the following value of tuning parameters in this experiment. 
The parameter $\gamma$ in the $\gamma$-divergence was set to $0.1$. 
The parameter $\lambda$ of $L_1$ regularization was set to $10^{-1}, 10^{-2}, 10^{-3}$.

The RSPG needed the number of candidates $N_{cand}$ and post-samples $N_{post}$ for post-optimization as described in Algorithm \ref{general 2-RSPG}. 
Then, we used $N_{cand}=5$ and $N_{post}= \left\lceil N/10 \right\rceil$. 
%
%
%
\subsection{Result}
Tables \ref{tab1}-\ref{tab3} show the EmpRisk, ExpRisk and computation time in the case $\lambda = 10^{-3}, 10^{-2} \mbox{ and } 10^{-1}$. 
%
%
%
Except for the computation time, our method outperformed comparative methods with several sizes of sample and dimension. 
We verify that the SGD, which are not theoretically guaranteed to converge for non-convex loss, can not reach the stationary point numerically. 
For the computation time, our method was comparable to the SGD. 
%

\begin{table}[H]
\caption{EmpRisk, ExpRisk and computation time for $\lambda=10^{-3}$ }
\label{tab1}
 \centering
 \begin{tabular}{c | c c c  }    
 \multicolumn{1}{c}{} & \multicolumn{3}{c}{$N=10000$, $p=1000$ } \\
Methods  & EmpRisk & ExpRisk  & Time   \\ \hline
RSPG  & -0.629   & -0.628      & 75.2 \\
SGD with $1$ mini-batch & -0.162 & -0.155 & 95.9 \\
SGD with $10$ mini-batch & 1.1$\times10^{-2}$ & 1.45$\times10^{-2}$  & 73.2 \\
SGD with $30$ mini-batch & 4.79$\times10^{-2}$ & 5.02$\times10^{-2}$  & 71.4 \\
SGD with $50$ mini-batch & 6.03$\times10^{-2}$ & 6.21$\times10^{-2}$  & 71.1 \\

  \multicolumn{1}{c}{} & \multicolumn{3}{c}{$N=30000$, $ p=1000$}  \\
Methods  & EmpRisk & ExpRisk   & Time \\ \hline
RSPG  & -0.692 & -0.691   & 78.3 \\
SGD with $1$ mini-batch & -0.365 & -0.362  & 148 \\
SGD with $10$ mini-batch & -0.111 & -0.11  & 79.6 \\
SGD with $30$ mini-batch & -5.71$\times10^{-2}$ & -5.6$\times10^{-2}$ & 73.7 \\
SGD with $50$ mini-batch & -3.98$\times10^{-2}$ & -3.88$\times10^{-2}$  & 238 \\

  \multicolumn{1}{c}{} & \multicolumn{3}{c}{$N=10000$, $ p=2000$}  \\
Methods  & EmpRisk & ExpRisk  & Time  \\ \hline
RSPG  & -0.646 & -0.646   & 117 \\
SGD with $1$ mini-batch & 0.187 & 0.194 & 145 \\
SGD with $10$ mini-batch & 0.428 & 0.431  & 99.2 \\
SGD with $30$ mini-batch & 0.479 & 0.481 & 95.7 \\
SGD with $50$ mini-batch & 0.496 & 0.499  & 166 \\

  \multicolumn{1}{c}{} & \multicolumn{3}{c}{$N=30000$, $ p=2000$}  \\
Methods  & EmpRisk & ExpRisk  & Time  \\ \hline
RSPG  & -0.696 & -0.696   & 125 \\
SGD with $1$ mini-batch & -3.89$\times10^{-2}$ & -3.56$\times10^{-2}$  & 251 \\
SGD with $10$ mini-batch & 0.357 & 0.359  & 112 \\
SGD with $30$ mini-batch & 0.442 & 0.443  & 101 \\
SGD with $50$ mini-batch & 0.469 & 0.47  & 337 \\

\end{tabular}
\end{table}

\begin{table}[H]
\caption{EmpRisk, ExpRisk and computation time for $\lambda=10^{-2}$  }
\label{tab2} 
 \centering
 \begin{tabular}{c | c c c  }    
 \multicolumn{1}{c}{} & \multicolumn{3}{c}{$N=10000$, $p=1000$ } \\
Methods  & EmpRisk & ExpRisk  & Time   \\ \hline
RSPG  & -0.633  & -0.632   & 75.1 \\
SGD with $1$ mini-batch & -0.322 & -0.322  & 96.1 \\
SGD with $10$ mini-batch & 1.36 & 1.37 & 73.4 \\
SGD with $30$ mini-batch & 2.61 & 2.61  &71.6 \\
SGD with $50$ mini-batch & 3.08 & 3.08 &   409 \\

  \multicolumn{1}{c}{} & \multicolumn{3}{c}{$N=30000$, $ p=1000$}  \\
Methods  & EmpRisk & ExpRisk   & Time \\ \hline
RSPG  & -0.65 & -0.649   & 78.4 \\
SGD with $1$ mini-batch & -0.488 & -0.487  & 148 \\
SGD with $10$ mini-batch & 0.164 & 0.165 & 79.7 \\
SGD with $30$ mini-batch & 1.34 & 1.34  & 73.9 \\
SGD with $50$ mini-batch & 1.95 & 1.95  & 576 \\

  \multicolumn{1}{c}{} & \multicolumn{3}{c}{$N=10000$, $ p=2000$}  \\
Methods  & EmpRisk & ExpRisk  & Time  \\ \hline
RSPG  & -0.647 & -0.646   & 117 \\
SGD with $1$ mini-batch & -0.131 & -0.13  & 144 \\
SGD with $10$ mini-batch & 3.23 & 3.23  & 99.1 \\
SGD with $30$ mini-batch & 5.63 & 5.63 & 95.6 \\
SGD with $50$ mini-batch & 6.52 & 6.53 &   503 \\

  \multicolumn{1}{c}{} & \multicolumn{3}{c}{$N=30000$, $ p=2000$}  \\
Methods  & EmpRisk & ExpRisk  & Time  \\ \hline
RSPG  & -0.66 & -0.66   & 125 \\
SGD with $1$ mini-batch & -0.436 & -0.435  & 250 \\
SGD with $10$ mini-batch & 0.875 & 0.875  & 112 \\
SGD with $30$ mini-batch & 3.19 & 3.19  & 100 \\
SGD with $50$ mini-batch & 4.38 & 4.38  & 675 \\

\end{tabular}
\end{table}

\begin{table}[H]
\caption{EmpRisk, ExpRisk and computation time for $\lambda=10^{-1}$   }
\label{tab3}
 \centering
 \begin{tabular}{c | c  c c }    
 \multicolumn{1}{c}{} & \multicolumn{3}{c}{$N=10000$, $p=1000$ } \\
Methods  & EmpRisk & ExpRisk  & Time   \\ \hline
RSPG  & -0.633  &  -0.632    & 74.6 \\
SGD with $1$ mini-batch & -0.411 & -0.411 & 95.6 \\
SGD with $10$ mini-batch & 0.483 & 0.483 &  72.9 \\
SGD with $30$ mini-batch & 1.53 & 1.53  &71.1 \\
SGD with $50$ mini-batch & 2.39 & 2.39  & 70.8 \\

  \multicolumn{1}{c}{} & \multicolumn{3}{c}{$N=30000$, $ p=1000$}  \\
Methods  & EmpRisk & ExpRisk   & Time \\ \hline
RSPG  & -0.64 & -0.639   & 78.1 \\
SGD with $1$ mini-batch & -0.483 & -0.482  & 148 \\
SGD with $10$ mini-batch & -4.56$\times10^{-2}$ & -4.5$\times10^{-2}$  & 79.6 \\
SGD with $30$ mini-batch & 0.563 & 0.563  & 73.7 \\
SGD with $50$ mini-batch & 0.963 & 0.963  & 238 \\

  \multicolumn{1}{c}{} & \multicolumn{3}{c}{$N=10000$, $ p=2000$}  \\
Methods  & EmpRisk & ExpRisk  & Time  \\ \hline
RSPG  & -0.654 & -0.653   & 116 \\
SGD with $1$ mini-batch & -0.462 & -0.461 &   144 \\
SGD with $10$ mini-batch & 0.671 & 0.672 &   98.9 \\
SGD with $30$ mini-batch & 2.43 & 2.44 &   95.4 \\
SGD with $50$ mini-batch & 4.02 & 4.02 &   165 \\

  \multicolumn{1}{c}{} & \multicolumn{3}{c}{$N=30000$, $ p=2000$}  \\
Methods  & EmpRisk & ExpRisk &  Time  \\ \hline
RSPG  & -0.66 & -0.66  &  130 \\
SGD with $1$ mini-batch & -0.559 & -0.558 &  262 \\
SGD with $10$ mini-batch & -9.71$\times10^{-2}$ & -9.62$\times10^{-2}$ &   116 \\
SGD with $30$ mini-batch & 0.697 & 0.697 &  104 \\
SGD with $50$ mini-batch & 1.32 & 1.32 &  340 \\

\end{tabular}
\end{table}

\section{Real data analysis}\label{7}
We applied our method `online sparse $\gamma$-Poisson' to real data `Online News Popularity' (\citet{10.1007/978-3-319-23485-4_53}), which is available at \url{https://archive.ics.uci.edu/ml/datasets/online+news+popularity}. 
%
%
%
We compared our method with sparse Poisson regression which was implemnted by R-package `glmnet' with default parameter setting.
%
%
%

Online News Popularity dataset contains 39644 samples with 58 dimensional explanatory variables.
We divided the dataset to 20000 training and 19644 test samples. 
In Online News Popularity dataset, the exposure time of each sample is different. 
Then, we used the $\log$ transformed feature value `timedelta' as the offset term. 
%
%
Moreover, 2000 training samples were randomly selected.
Outliers were incorporated into training samples as follows:
\begin{align*}
y_{outlier,i} = y_{i} + 100 \times t_{i} \quad (i=1, \ldots , 2000),
\end{align*}
where $i$ is the index of the randomly selected sample and $y_{i}$ is the response variable of the $i$-th randomly selected sample and $t_{i}$ is the offset term of the $i$-th randomly selected sample.
%
%

As a measure of predictive performance, the root trimmed mean squared prediction error (RTMSPE) was computed for the test samples given by
\begin{align*}
\textrm{RTMSPE} =\sqrt{ \frac{1}{h} \sum_{j=1}^h e_{[j]}^2 },
\end{align*}
where $ e_j^2= \left(  ( y_{j} - \left\lfloor \exp \left( \log (t_{j}) +  \hat{\beta_0} + x_j^{T} \hat{\beta} \right) \right\rfloor \right)^2  $, $e_{[1]}^2 \leq  \cdots \leq e_{[19644]}^2$ are the order statistics of $e_1^2, \cdots , e_{19644}^2$ and $h= \lfloor (19644+1)(1-\alpha) \rfloor$ with $\alpha=0.05, \cdots , 0.3$. 

In our method, we need an initial point and some tuning parameters to obtain the estimate. 
Therefore, we used $N_{init}=200$ samples which were used for estimating an initial point and other parameters $L$ in (\ref{assum1}) and $\tau^2$ in (\ref{assum2}) to calculate in advance. 
%
%
%
%
In this experiment, we used the estimate of the RANSAC. 
For estimating $L$, we followed the way to in \cite{Ghadimi:2016}, page 298-299. 
Moreover, we used the following value of tuning parameters in this experiment. 
The parameter $\gamma$ in the $\gamma$-divergence was set to $0.1, 0.5, 1.0$. 
The parameter $\lambda$ of $L_1$ regularization was selected by the robust cross-validation proposed by \citet{e19110608}. 
The robust cross-validation was given by: 
\begin{align*}
\mbox{RoCV}(\lambda) = - \frac{1}{n} \sum_{i=1}^n \frac{   f(y_i|x_i;\hat{\theta}^{[-i]})^{\gamma_0}  }{  \left( \int f(y|x_i;\hat{\theta}^{[-i]})^{1+\gamma_0}dy\right)^\frac{\gamma_0}{1+\gamma_0}}  ,
\end{align*}
where $\hat{\theta}^{[-i]}$ is the estimated parameter deleting the $i$-th observation and $\gamma_0$ is an appropriate tuning parameter. 
In this experiment, $\gamma_0$ was set to $1.0$.
The mini-batch size was set to $100, 200, 500$. 
The RSPG needed the number of candidates and post-samples $N_{cand}$ and $N_{post}$ for post-optimization as described in Algorithm \ref{general 2-RSPG}. 
We used $N_{cand}=5$ and $N_{post}= \left\lceil N/10 \right\rceil$. 
We showed the best result of our method and comparative method in Table \ref{tab4}. 
%
All results were obtained in R version 3.3.0 with Intel Core i7-4790K machine. 
Table \ref{tab4} shows that our method performed better than sparse Poisson regression. 
%

\begin{table}[H]
\caption{Root trimmed mean squared prediction error in test samples}
 \label{tab4}
 \centering
 \begin{tabular}{c | c c c c c c |}    
 \multicolumn{1}{c |}{} & \multicolumn{6}{c|}{ trimming fraction 100$\alpha \%$ } \\ \hline
Methods  & 5$\%$ & 10$\%$ & 15$\%$ & 20$\%$ & 25$\%$ & 30$\%$   \\ \hline
Our method  & 2419.3 & 1760.2 & 1423.7 & 1215.7 & 1064 & 948.9  \\
Sparse Poisson Regrssion & 2457.2  &  2118.1  &  1902.5  &  1722.9 &  1562.5 &  1414.1 \\
\end{tabular}
\end{table}

\section{Conclusions}\label{8}
We proposed the online robust and sparse GLM based on the $\gamma$-divergence. 
We applied a stochastic optimization approach in order to reduce the computational complexity and overcome the computational problem on the hypergeometric series in Poisson regression. 
We adopted the RSPG, which guaranteed the global convergence property for non-convex stochastic optimization problem, as a stochastic optimization approach. 
We proved that the global convergence property can be extended to the classical first-order necessary condition. 
In this paper, linear/logistic/Poisson regression problems with $L_1$ regularization were illustrated in detail. 
As a result, not only Poisson case but also linear/logistic case can scale well for very large problems by virtue of the stochastic optimization approach. 
To the best of our knowledge, there is no efficient method for the robust and sparse Poisson regression, but w e have succeeded to propose an efficient estimation procedure with online strategy. 
The numerical experiments and real data analysis suggested that our methods had good performances in terms of both accuracy and computational cost. 
However, there are still some problems in Poisson regression problem, e.g., overdispersion \citep{doi:10.1080/01621459.1989.10478792}, zero inflated Poisson \citep{10.2307/1269547}.
Therefore, it can be useful to extend the Poisson regression to the negative binomial regression and the zero inflated Poisson regression for future work. 
Moreover, the accelerated RSPG was proposed in \citep{Ghadimi2016}, and then we can adopt it as a stochastic optimization approach in order to achieve faster convergence than the RSPG.

\section*{Appendix}
The proof of Theorem 5.2.
\begin{align}\label{lem1}
&\lim_{ k \downarrow 0} \frac{ \Psi(\theta^{(R)} + k\delta) - \Psi(\theta^{(R)}) }{k} \nonumber \\
&= \lim_{ k \downarrow 0} \frac{  E_{(x,y)} \left[ l( (x,y);\theta^{(R)} + k\delta) \right] -  E_{(x,y)} \left[ l( (x,y);\theta^{(R)} ) \right] + \lambda P(\theta^{(R)} + k\delta) -  \lambda P(\theta^{(R)}) }{k} \nonumber \\
&= \lim_{ k \downarrow 0} \frac{  E_{(x,y)} \left[ l( (x,y);\theta^{(R)} + k\delta) \right] -  E_{(x,y)} \left[ l( (x,y);\theta^{(R)} ) \right] }{k} +  \lim_{ k \downarrow 0} \frac{   \lambda P(\theta^{(R)} + k\delta) - \lambda P(\theta^{(R)}) }{k} , 
\end{align}
The directional derivative of the differentiable function always exist and is represented by the dot product with the gradient of the differentiable function and the direction given by
\begin{align}\label{lem2}
\lim_{ k \downarrow 0} \frac{  E_{(x,y)} \left[ l( (x,y);\theta^{(R)} + k\delta) \right] -  E_{(x,y)} \left[ l( (x,y);\theta^{(R)} ) \right] }{k}  = \left\langle \nabla  E_{(x,y)} \left[ l( (x,y);\theta^{(R)}) \right] , \delta \right\rangle. 
\end{align}
Moreover, the directional derivative of the (proper) convex function exists at the relative interior point of the domain and is greater than the dot product with the subgradient of the convex function and direction \citep{rockafellar-1970a} given by
\begin{align}\label{lem3}
 \lim_{ k \downarrow 0} \frac{   \lambda P(\theta^{(R)} + k\delta) - \lambda P(\theta^{(R)}) }{k} & = \sup_{g \in \partial P(\theta^{(R)}) }   \lambda  \left\langle g , \delta \right\rangle \nonumber \\
 & \geq  \lambda  \left\langle g , \delta \right\rangle  \ for \ any \ g \in \partial P(\theta^{(R)}).
\end{align}
Then, by the optimality condition of (\ref{update true grad RSPG}), we have the following equation
\begin{align}\label{lem4}
0  & \in  \nabla E_{(x,y)} \left[ l((x,y);  \theta^{(R)} )  \right] + \lambda \partial P(\theta^{+})  +\frac{1}{\eta_R} \left\{   \nabla w \left( \theta^{+} \right) - \nabla w \left( \theta^{(R)} \right) \right\}  \nonumber \\ 
\frac{1}{\eta_R} \left\{   \nabla w \left( \theta^{(R)} \right) - \nabla w \left( \theta^{+} \right) \right\}  & \in  \nabla E_{(x,y)} \left[ l((x,y);  \theta^{(R)} ) \right]  + \lambda \partial P(\theta^{+})  .
\end{align}
Therefore, we can obtain (\ref{mod RSPG glb conv}) from $P_{X,R} \approx 0$, (\ref{lem1}),  (\ref{lem2}),  (\ref{lem3}) and  (\ref{lem4}) as follows;
\begin{align*}
& \lim_{ k \downarrow 0} \frac{  E_{(x,y)} \left[ l( (x,y);\theta^{(R)} + k\delta) \right] -  E_{(x,y)} \left[ l( (x,y);\theta^{(R)} ) \right] }{k} +  \lim_{ k \downarrow 0} \frac{   \lambda P(\theta^{(R)} + k\delta) - \lambda P(\theta^{(R)}) }{k} \\
& \geq  \left\langle \nabla  E_{(x,y)} \left[ l( (x,y);\theta^{(R)}) \right] , \delta \right\rangle +\lambda  \left\langle g , \delta \right\rangle  \qquad for \ any \ g \in \partial P(\theta^{(R)})  \\
& =\left\langle  \nabla  E_{(x,y)} \left[ l( (x,y);\theta^{(R)}) \right] +\lambda g ,\delta \right\rangle \qquad for \ any \ g \in \partial P(\theta^{(R)})  \\  
& \ni 0
\end{align*}

\bibliography{ref}

\begin{thebibliography}{29}
\providecommand{\natexlab}[1]{#1}
\providecommand{\url}[1]{\texttt{#1}}
\expandafter\ifx\csname urlstyle\endcsname\relax
  \providecommand{\doi}[1]{doi: #1}\else
  \providecommand{\doi}{doi: \begingroup \urlstyle{rm}\Url}\fi

\bibitem[Alfons et~al.(2013)Alfons, Croux, and Gelper]{alfons2013}
A.~Alfons, C.~Croux, and S.~Gelper.
\newblock Sparse least trimmed squares regression for analyzing
  high-dimensional large data sets.
\newblock \emph{The Annals of Applied Statistics}, 7\penalty0 (1):\penalty0
  226--248, 2013.

\bibitem[Beck and Teboulle(2009)]{Beck:2009:FIS:1658360.1658364}
A.~Beck and M.~Teboulle.
\newblock A fast iterative shrinkage-thresholding algorithm for linear inverse
  problems.
\newblock \emph{SIAM J. Img. Sci.}, 2\penalty0 (1):\penalty0 183--202, Mar.
  2009.
\newblock ISSN 1936-4954.
\newblock \doi{10.1137/080716542}.
\newblock URL \url{http://dx.doi.org/10.1137/080716542}.

\bibitem[Bootkrajang and Kab{\'a}n(2013)]{doi:10.1093/bioinformatics/btt078}
J.~Bootkrajang and A.~Kab{\'a}n.
\newblock Classification of mislabelled microarrays using robust sparse
  logistic regression.
\newblock \emph{Bioinformatics}, 29\penalty0 (7):\penalty0 870--877, 2013.
\newblock \doi{10.1093/bioinformatics/btt078}.
\newblock URL \url{+ http://dx.doi.org/10.1093/bioinformatics/btt078}.

\bibitem[Borwein and Lewis(2010)]{borwein2010convex}
J.~Borwein and A.~S. Lewis.
\newblock \emph{Convex analysis and nonlinear optimization: theory and
  examples}.
\newblock Springer Science \& Business Media, 2010.

\bibitem[Bottou(2010)]{bottou2010large}
L.~Bottou.
\newblock Large-scale machine learning with stochastic gradient descent.
\newblock In \emph{Proceedings of COMPSTAT'2010}, pages 177--186. Springer,
  2010.

\bibitem[Bubeck(2015)]{Bubeck:2015:COA:2858997.2858998}
S.~Bubeck.
\newblock Convex optimization: Algorithms and complexity.
\newblock \emph{Found. Trends Mach. Learn.}, 8\penalty0 (3-4):\penalty0
  231--357, Nov. 2015.
\newblock ISSN 1935-8237.
\newblock \doi{10.1561/2200000050}.
\newblock URL \url{http://dx.doi.org/10.1561/2200000050}.

\bibitem[Chi and Scott(2014)]{doi:10.1080/10618600.2012.737296}
E.~C. Chi and D.~W. Scott.
\newblock Robust parametric classification and variable selection by a minimum
  distance criterion.
\newblock \emph{Journal of Computational and Graphical Statistics}, 23\penalty0
  (1):\penalty0 111--128, 2014.
\newblock \doi{10.1080/10618600.2012.737296}.

\bibitem[Dean and Lawless(1989)]{doi:10.1080/01621459.1989.10478792}
C.~Dean and J.~F. Lawless.
\newblock Tests for detecting overdispersion in poisson regression models.
\newblock \emph{Journal of the American Statistical Association}, 84\penalty0
  (406):\penalty0 467--472, 1989.
\newblock \doi{10.1080/01621459.1989.10478792}.
\newblock URL
  \url{http://www.tandfonline.com/doi/abs/10.1080/01621459.1989.10478792}.

\bibitem[Duchi and Singer(2009)]{Duchi:2009:EOB:1577069.1755882}
J.~Duchi and Y.~Singer.
\newblock Efficient online and batch learning using forward backward splitting.
\newblock \emph{J. Mach. Learn. Res.}, 10:\penalty0 2899--2934, Dec. 2009.
\newblock ISSN 1532-4435.
\newblock URL \url{http://dl.acm.org/citation.cfm?id=1577069.1755882}.

\bibitem[Duchi et~al.(2008)Duchi, Shalev-Shwartz, Singer, and
  Chandra]{Duchi:2008:EPL:1390156.1390191}
J.~Duchi, S.~Shalev-Shwartz, Y.~Singer, and T.~Chandra.
\newblock Efficient projections onto the l1-ball for learning in high
  dimensions.
\newblock pages 272--279, 2008.
\newblock \doi{10.1145/1390156.1390191}.

\bibitem[Duchi et~al.(2010)Duchi, Shalev{-}Shwartz, Singer, and
  Tewari]{DBLPDuchiSST10}
J.~C. Duchi, S.~Shalev{-}Shwartz, Y.~Singer, and A.~Tewari.
\newblock Composite objective mirror descent.
\newblock pages 14--26, 2010.
\newblock URL
  \url{http://colt2010.haifa.il.ibm.com/papers/COLT2010proceedings.pdf#page=22}.

\bibitem[Duchi et~al.(2011)Duchi, Hazan, and Singer]{journals/jmlr/DuchiHS11}
J.~C. Duchi, E.~Hazan, and Y.~Singer.
\newblock Adaptive subgradient methods for online learning and stochastic
  optimization.
\newblock \emph{Journal of Machine Learning Research}, 12:\penalty0 2121--2159,
  2011.
\newblock URL
  \url{http://dblp.uni-trier.de/db/journals/jmlr/jmlr12.html#DuchiHS11}.

\bibitem[Fernandes et~al.(2015)Fernandes, Vinagre, and
  Cortez]{10.1007/978-3-319-23485-4_53}
K.~Fernandes, P.~Vinagre, and P.~Cortez.
\newblock A proactive intelligent decision support system for predicting the
  popularity of online news.
\newblock In F.~Pereira, P.~Machado, E.~Costa, and A.~Cardoso, editors,
  \emph{Progress in Artificial Intelligence}, pages 535--546, Cham, 2015.
  Springer International Publishing.
\newblock ISBN 978-3-319-23485-4.

\bibitem[Fujisawa and Eguchi(2008)]{Fujisawa:2008:RPE:1434999.1435056}
H.~Fujisawa and S.~Eguchi.
\newblock Robust parameter estimation with a small bias against heavy
  contamination.
\newblock \emph{Journal of Multivariate Analysis}, 99\penalty0 (9):\penalty0
  2053--2081, 2008.

\bibitem[Ghadimi and Lan(2016)]{Ghadimi2016}
S.~Ghadimi and G.~Lan.
\newblock Accelerated gradient methods for nonconvex nonlinear and stochastic
  programming.
\newblock \emph{Mathematical Programming}, 156\penalty0 (1):\penalty0 59--99,
  Mar 2016.
\newblock ISSN 1436-4646.
\newblock \doi{10.1007/s10107-015-0871-8}.
\newblock URL \url{https://doi.org/10.1007/s10107-015-0871-8}.

\bibitem[Ghadimi et~al.(2016)Ghadimi, Lan, and Zhang]{Ghadimi:2016}
S.~Ghadimi, G.~Lan, and H.~Zhang.
\newblock Mini-batch stochastic approximation methods for nonconvex stochastic
  composite optimization.
\newblock \emph{Math. Program.}, 155\penalty0 (1-2):\penalty0 267--305, Jan.
  2016.
\newblock ISSN 0025-5610.
\newblock \doi{10.1007/s10107-014-0846-1}.
\newblock URL \url{http://dx.doi.org/10.1007/s10107-014-0846-1}.

\bibitem[Hunter and Lange(2004)]{hunter:mm}
D.~R. Hunter and K.~Lange.
\newblock A tutorial on mm algorithms.
\newblock \emph{The American Statistician}, 58\penalty0 (1):\penalty0 30--37,
  2004.

\bibitem[Kanamori and
  Fujisawa(2015)]{RePEc:oup:biomet:v:102:y:2015:i:3:p:559-572.}
T.~Kanamori and H.~Fujisawa.
\newblock Robust estimation under heavy contamination using unnormalized
  models.
\newblock \emph{Biometrika}, 102\penalty0 (3):\penalty0 559--572, 2015.

\bibitem[Kawashima and Fujisawa(2017)]{e19110608}
T.~Kawashima and H.~Fujisawa.
\newblock Robust and sparse regression via $\gamma$-divergence.
\newblock \emph{Entropy}, 19\penalty0 (608), 2017.
\newblock ISSN 1099-4300.
\newblock \doi{10.3390/e19110608}.
\newblock URL \url{http://www.mdpi.com/1099-4300/19/11/608}.

\bibitem[Khan et~al.(2007)Khan, Van~Aelst, and
  Zamar]{RePEc:bes:jnlasa:v:102:y:2007:m:december:p:1289-1299}
J.~A. Khan, S.~Van~Aelst, and R.~H. Zamar.
\newblock Robust linear model selection based on least angle regression.
\newblock \emph{Journal of the American Statistical Association}, 102\penalty0
  (480):\penalty0 1289--1299, 2007.

\bibitem[Kivinen and Warmuth(1995)]{Kivinen95exponentiatedgradient}
J.~Kivinen and M.~K. Warmuth.
\newblock Exponentiated gradient versus gradient descent for linear predictors.
\newblock \emph{Information and Computation}, 132, 1995.

\bibitem[Lambert(1992)]{10.2307/1269547}
D.~Lambert.
\newblock Zero-inflated poisson regression, with an application to defects in
  manufacturing.
\newblock \emph{Technometrics}, 34\penalty0 (1):\penalty0 1--14, 1992.
\newblock ISSN 00401706.
\newblock URL \url{http://www.jstor.org/stable/1269547}.

\bibitem[McCullagh and Nelder(1989)]{mccullagh1989generalized}
P.~McCullagh and J.~Nelder.
\newblock \emph{Generalized Linear Models, Second Edition}.
\newblock Chapman and Hall/CRC Monographs on Statistics and Applied Probability
  Series. Chapman \& Hall, 1989.
\newblock ISBN 9780412317606.
\newblock URL \url{http://books.google.com/books?id=h9kFH2\_FfBkC}.

\bibitem[Nelder and Wedderburn(1972)]{10.2307/2344614}
J.~A. Nelder and R.~W.~M. Wedderburn.
\newblock Generalized linear models.
\newblock \emph{Journal of the Royal Statistical Society. Series A (General)},
  135\penalty0 (3):\penalty0 370--384, 1972.
\newblock ISSN 00359238.
\newblock URL \url{http://www.jstor.org/stable/2344614}.

\bibitem[Nesterov(2007)]{RePEc:cor:louvco:2007076}
Y.~Nesterov.
\newblock Gradient methods for minimizing composite objective function.
\newblock CORE Discussion Papers 2007076, Universit{\'e} catholique de Louvain,
  Center for Operations Research and Econometrics (CORE), 2007.
\newblock URL \url{https://EconPapers.repec.org/RePEc:cor:louvco:2007076}.

\bibitem[Rockafellar(1970)]{rockafellar-1970a}
R.~T. Rockafellar.
\newblock \emph{Convex analysis}.
\newblock Princeton Mathematical Series. Princeton University Press, Princeton,
  N. J., 1970.

\bibitem[Tibshirani(1996)]{Tibshirani94regressionshrinkage}
R.~Tibshirani.
\newblock Regression shrinkage and selection via the lasso.
\newblock \emph{Journal of the Royal Statistical Society. Series B}, pages
  267--288, 1996.

\bibitem[Xiao(2010)]{Xiao2010DualAM}
L.~Xiao.
\newblock Dual averaging methods for regularized stochastic learning and online
  optimization.
\newblock \emph{Journal of Machine Learning Research}, 11:\penalty0 2543--2596,
  2010.

\bibitem[Zou and Hastie(2005)]{Zou05regularizationand}
H.~Zou and T.~Hastie.
\newblock Regularization and variable selection via the elastic net.
\newblock \emph{Journal of the Royal Statistical Society: Series B},
  67\penalty0 (2):\penalty0 301--320, 2005.

\end{thebibliography}

\end{document}